\documentclass{article} 
\usepackage{iclr2026_conference,times}

\def\equationautorefname~#1\null{Equation~(#1)\null}
\usepackage{hyperref}
\usepackage{url}
\usepackage{color}
\usepackage{bm}
\usepackage{xspace}
\usepackage{algorithmic}
\usepackage{algorithm}
\usepackage{mdwlist}    
\usepackage{paralist} 
\usepackage{comment}
\usepackage{autobreak}

\usepackage{utfsym}

\usepackage{adjustbox}
\usepackage{array}
\usepackage{booktabs}
\usepackage{multirow}
\usepackage{array,graphicx}
\usepackage{pifont}
\usepackage{color}

\usepackage{colortbl}
\definecolor{lightgray}{gray}{0.85}
\usepackage{multirow} 
\usepackage{caption}

\usepackage{fancybox} 
\usepackage{mathtools}
\usepackage{amssymb}

\usepackage{breqn}
\usepackage{enumerate}
\usepackage{enumitem}
\usepackage{physics}
\usepackage{pifont}

\usepackage{amsmath}
\usepackage{amsthm}

\usepackage[utf8]{inputenc} 
\usepackage[T1]{fontenc}    
\usepackage{booktabs}       
\usepackage{amsfonts}       
\usepackage{nicefrac}       
\usepackage{microtype}      
\usepackage{xcolor}         
\usepackage{ulem}
\usepackage{here}

\usepackage{amsmath,amsfonts,bm}

\def\eqref#1{equation~\ref{#1}}

\def\1{\bm{1}}

\DeclareMathAlphabet{\mathsfit}{\encodingdefault}{\sfdefault}{m}{sl}
\SetMathAlphabet{\mathsfit}{bold}{\encodingdefault}{\sfdefault}{bx}{n}

\newcommand{\TSK}[1]{}

\newtheorem{problem}{Problem}

\newtheorem{definition}{Definition}
\newtheorem{proposition}{Proposition}

\newtheorem*{theorem*}{Theorem}
\newcommand{\mypara}[1]{\noindent{\textbf{#1.}}}

\newcommand{\method}{\text{CALIPER}\xspace}

\newcommand{\param}{\theta\xspace}
\newcommand{\shiftflag}{Retrain flag\xspace}

\newcommand{\data}{\mathbf{X}}

\newcommand{\datavec}{\mathbf{x}}

\newcommand{\fdata}{\mathbf{Y}}

\newcommand{\fdatavec}{\mathbf{y}}

\newcommand{\prdata}{\mathbf{\overline{X}}}

\newcommand{\prdatavec}{\mathbf{\overline{x}}}

\newcommand{\datadim}{d}
\newcommand{\datalen}{n_t}

\newcommand{\prstatevec}{\mathbf{s}'}

\newcommand{\state}{\mathbf{S}}

\newcommand{\statevec}{\mathbf{s}}

\newcommand{\stopping}{R}

\title{When to Retrain after Drift: A Data-Only Test of Post-Drift Data Size Sufficiency}

\author{Ren Fujiwara\textsuperscript{1}, 
  \hspace{0.5mm}Yasuko Matsubara\textsuperscript{1}, 
  \hspace{0.5mm}Yasushi Sakurai\textsuperscript{1},\\
  \textsuperscript{1}SANKEN, The University of Osaka, Japan\\
  \hspace{1.8mm}\texttt{\{r-fujiwr88,yasuko,yasushi\}@sanken.osaka-u.ac.jp}\\
}

\iclrfinalcopy 
\begin{document}

\maketitle

\begin{abstract}
Sudden concept drift makes previously trained predictors unreliable, yet deciding when to retrain and what post-drift data size is sufficient is rarely addressed. We propose \method ---a detector- and model-agnostic, data-only test that estimates the post-drift data size required for stable retraining. \method exploits state dependence in streams generated by dynamical systems: we run a single-pass weighted local regression over the post-drift window and track a one-step proxy error as a function of a locality parameter $\theta$. When an effective sample size gate is satisfied, a monotonically non-increasing trend in this error with increasing a locality parameter indicates that the data size is sufficiently informative for retraining.
We also provide a theoretical analysis of our method, and we show that the algorithm has a low per-update time and memory. Across datasets from four heterogeneous domains, three learner families, and two detectors, \method consistently matches or exceeds the best fixed data size for retraining while incurring negligible overhead and often outperforming incremental updates. \method closes the gap between drift detection and data-sufficient adaptation in streaming learning.
\end{abstract}

\section{Introduction}
\label{sec:01_intro}
Despite the ubiquity of data streams, building reliable time-series predictors in non-stationary environments remains challenging. A substantial body of work shows that maintaining performance hinges on rapid adaptation to concept drift \citep{DDM, EDDM, ADWIN, HDDM, PH, KSWIN, FSNet, DISMO, OneNet, D-Tracker, ModePlait, MicroAdapt, PROCEED, BST}.
However, most practical gains are achieved under incremental drift, where the data distribution changes gradually. In contrast, real-world streams often undergo abrupt shifts that invalidate previously learned models \citep{RegimeShift2, RegimeShift, RegimeCast}. When such a sudden drift occurs, a pragmatic and effective remedy is to retrain the predictor on newly arrived post-drift data rather than salvaging the pre-drift model \citep{gamadriftsurvey, conceptdriftsurveyIEEE2018}. We focus on this sudden drift regime and study how to provide stable retraining—namely, how to determine the post-drift data size needed to restore accuracy safely.

Under such sudden drift, window-based strategies are widely used in streaming settings. ADaptive WINdowing (ADWIN) \citep{ADWIN} overcomes the limitations of fixed windows by dynamically splitting a sliding window into two subwindows and provides provable bounds on false positives and false negatives, which is why it is widely adopted. Kolmogorov–Smirnov WINdowing (KSWIN) \citep{KSWIN} applies a two-sample Kolmogorov–Smirnov test over empirical cumulative distribution functions (CDFs) to capture distributional changes beyond mean shifts, including variance and shape. However, detection alone does not tell us what post-drift data size is needed to retrain a model that will generalize. Updating too early risks overfitting to transient noise; waiting too long prolongs downtime, keeps a stale pre-drift model in production for an extended period, and degrades predictive accuracy. These trade-offs call for a principled way to decide when the post-drift data size has become sufficient to retrain safely.

We therefore deliberately focus on a different question than classical drift detection:
given that a drift alarm has already been raised, how many post-drift samples are
needed to safely retrain the downstream model? Window-based detectors such as ADWIN
and KSWIN decide whether and when a drift occurs, but they are silent about the
post-drift data sufficiency problem. In practice, this gap is critical: choosing a post-drift window that is too short leads to unstable retraining and oscillations, whereas waiting for an overly long window prolongs the use of a stale pre-drift model. Moreover, repeated probe-and-train approaches that retrain complex predictors (e.g., deep neural networks (DNNs)) to gauge readiness are computationally prohibitive in streaming scenarios.  This leads to our central question: Can we estimate the post-drift data size requirement directly from the stream, without actually retraining the model? 

\begin{figure}[t]
    \centering
    \includegraphics[width=\linewidth]{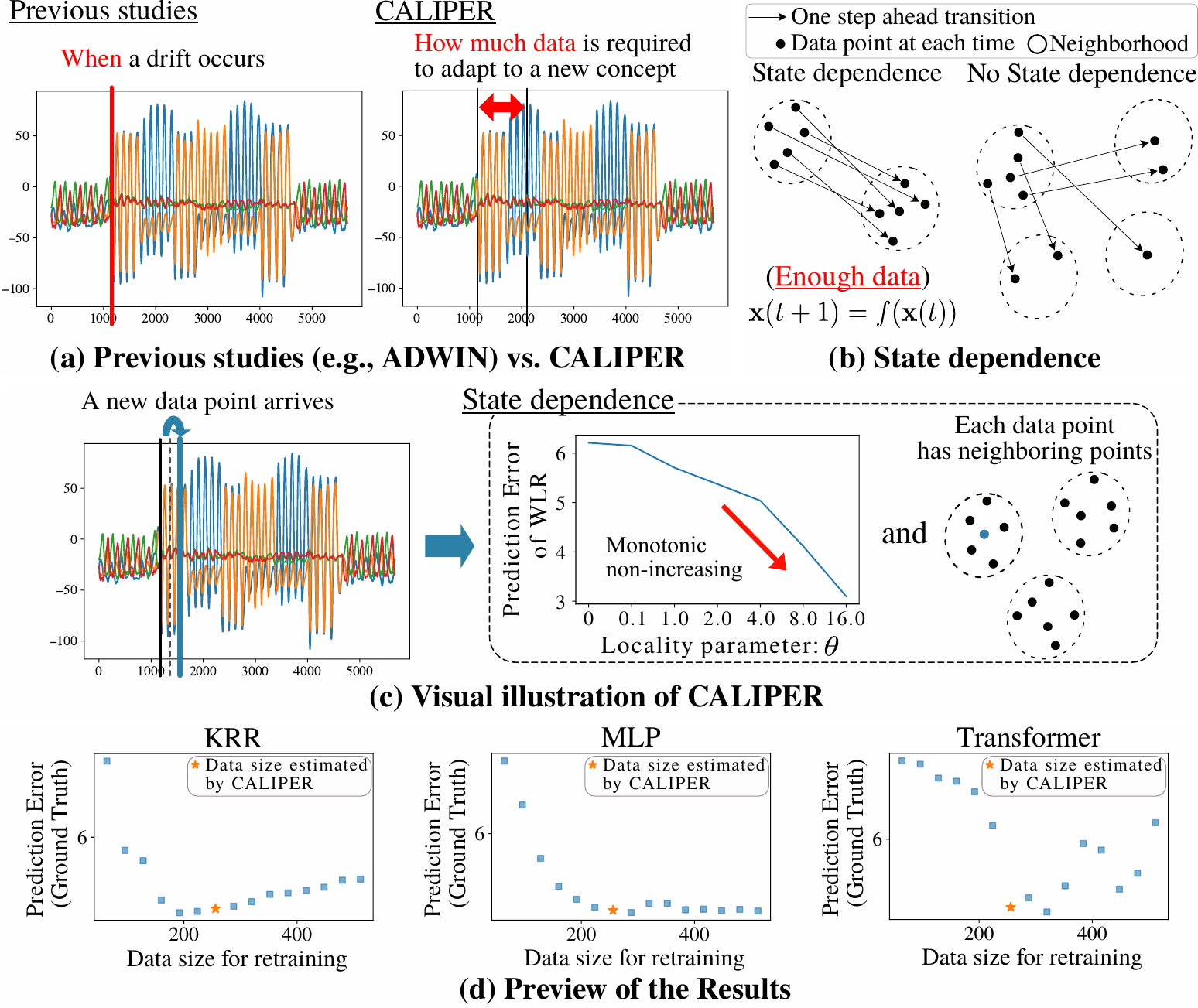}
    \caption{
        Overview of \method.
        (a) Unlike window-based detectors (e.g., ADWIN/KSWIN) that only indicate if/when drift occurs, \method estimates how much post-drift data are needed for stable retraining.
        (b) State dependence: for dynamical systems $(\mathbf{x}_{t+1}=f(\mathbf{x}_t))$, nearby states exhibit similar one-step transitions; thus data sufficiency reduces to testing whether the post-drift window exhibits adequate state dependence.
        (c) Pipeline: a locality parameter $\theta$ reweights nearby samples in weighted local regression; when the proxy error is monotonically non-increasing as $\theta$ increases and the neighborhood is sufficiently populated, retraining is triggered.
        (d) Results: star markers denote CALIPER’s estimated data sizes that yield low post-drift errors across heterogeneous learners (Kernel Ridge Regression (KRR), MLP, Transformer). 
        \method selects the optimal post-drift data size—i.e., the point at which retraining would be stable—without any retraining.
    }
    \label{fig:intro}
\end{figure}
To answer this, we propose \method, \textit{\textbf{C}umulative \textbf{A}ssessment of \textbf{L}ocality \textbf{I}ndicator for \textbf{P}ost-drift \textbf{E}stimation of \textbf{R}etraining-size}.
Assuming the data are generated by a (possibly nonlinear) dynamical system, \method exploits state dependence to infer a sufficient post-drift data size without retraining. Concretely, it partitions the post-drift window into a reference set and test points, and tracks a self-supervised proxy prediction error (e.g., one-step-ahead error) from a weighted local regression whose weights are governed by a locality parameter $\theta$. We apply an exponentially decaying kernel parameterized by $\theta$ to distances in feature space between test points and reference samples. 
A monotonically non-increasing trend in proxy error with increasing $\theta$ indicates adequate state dependence and triggers a retraining decision. 
The procedure is single-pass and computationally efficient, leveraging dynamical structure rather than model-specific internals. Finally, 
our analysis links CALIPER’s trigger to a formal notion of state dependence: under a stylized dynamical model, passing the monotone-locality test implies stronger state dependence.
We also provide an interpretation, via data-dependent generalization bounds, suggesting that stronger state dependence can be favorable for stable retraining.

\mypara{Overview of \method}
Fig.~\ref{fig:intro} summarizes \method: unlike window-based detectors such as ADWIN/KSWIN that merely signal whether/when a change occurs, \method estimates how much post-drift data are sufficient for stable retraining. The key insight is state dependence: if the data stream follows a dynamical law $(\mathbf{x}(t+1)=f(\mathbf{x}(t))$, nearby states exhibit similar one-step transitions, so deciding sufficiency reduces to testing whether the post-drift window displays adequate local consistency. Operationally, \method probes this via a locality parameter 
$\theta$ that upweights nearby samples in a weighted local regression; a monotonically non-increasing proxy prediction error as $\theta$ increases—together with a sufficiently populated neighborhood—certifies a data-side sufficiency proxy (ESS + monotone locality curve) and triggers retraining. As previewed in panel (d), the star-marked data sizes selected by \method yield few post-drift test errors across heterogeneous learners (Kernel Ridge Regression (KRR), MLP, Transformer). In contrast, excessively large data sizes worsen the error by delaying the update and prolonging the use of a stale model. Crucially, \method produces these estimates without observing at the post-drift test segment or relying on model-specific internals, closing the gap between drift detection and data-sufficient adaptation.

Our key contributions can be summarized as follows:
\begin{itemize}
\item \textbf{Problem \& Method.}
We formalize post-drift data sufficiency---estimating the minimum window size
needed to safely retrain after a sudden drift, given an external drift alarm. In
contrast to classical drift detectors, which only decide whether and when a change
occurred, our focus is on how much post-drift data are required for stable
adaptation. We propose CALIPER, a detector- and model-agnostic, data-only procedure
that selects the earliest window that passes the effective sample size (ESS) gate and monotone locality test
over a single-pass weighted local regression.
\item \textbf{Effective and Efficient.}
We show that \method can determine whether the data exhibits state dependence. 
We provide an interpretation, via data-dependent generalization bounds, suggesting that stronger state dependence can be favorable for stable retraining under standard regularity conditions.
The algorithm is streaming-friendly: it runs in a single pass and keeps per-update time and memory costs low by solving small weighted regressions under a fixed locality schedule.
\item \textbf{Empirical validation.}
Across four datasets (MoCap, TEP, Automobile, Dysts), three model families (KRR, MLP, Transformer), and two detectors (ADWIN, KSWIN), \method matches or exceeds the best fixed data size retraining without per-dataset tuning, improves post-drift error and recovery, and outperforms incremental updates with a negligible overhead.
\end{itemize}

\section{Proposed Method: Cumulative Assessment of Locality Indicator for Post-drift Estimation of Retraining-size (\method)}
\label{sec:02_proposed}
\begin{figure}[t]
    \centering
    \includegraphics[width=\linewidth]{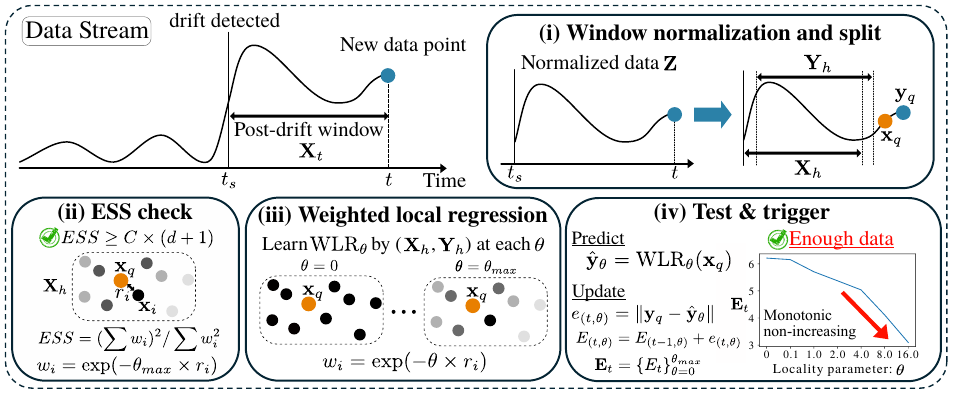}
    \caption{
          \method, a model-agnostic framework for dynamically estimating the data size required for retraining after sudden concept drift in a data stream: (i) Window normalization and split: after a drift alarm, the post-drift segment is normalized and partitioned into reference pairs $(\data_h, \fdata_h)$ and a query $(\datavec_q, \fdatavec_q)$. (ii) ESS check: kernel weights $\mathbf{w}_\theta =\exp(-\theta\times \mathbf{r})$ at the largest $\theta$ define an effective neighborhood; proceed only if $ESS \geq C\times (d+1)$. (iii) Weighted local regression: for each $\theta$ on a fixed grid, solve the weighted normal equations to obtain $\hat{\mathbf{y}}_\theta$ and compute a proxy prediction error. (iv) Test and trigger: a monotonic non-increase of the error as $\theta$ increases, sustained for consecutive updates, 
          indicates (a proxy for) sufficient local regularity/state dependence and triggers retraining.
          The rightmost panel illustrates the monotone trend of error versus $\theta$.
    }
    \label{fig:overview}
\end{figure}

\subsection{Problem Definition}
\label{prob:post-drift-sufficiency}
We consider a multivariate data stream $\{\datavec(t)\in\mathbb{R}^{\datadim}\}_{t\ge1}$ monitored by a drift detector. When a drift is detected at time $t_s$, we focus on the post-drift portion of the stream.

\begin{definition}[Post-drift window and data size]
For any $t\ge t_s{+}1$, the post-drift window is the set of observed samples
\[
  \data_t \;=\; \{\datavec(t_s),\,\datavec(t_s{+}1),\,\ldots,\,\datavec(t)\}\,,
\]
and its data size (window length) is the cardinality
\[
  \datalen \;=\; |\data_t| \;=\; t - t_s + 1\,.
\]
\end{definition}

The downstream predictor is treated as a black box—we do not access its internals. Intuitively, as data arrive, we wish to decide the minimum post-drift data size that allows a full stable retraining, avoiding triggers that are too early (overfitting/oscillation) or too late (prolonged degradation).

\begin{problem}[Post-Drift Data Sufficiency]
Given the post-drift stream after an alarm at $t_s$, find the smallest $n^\star\in\mathbb{N}$ such that retraining the downstream predictor on $\data_{t_s+n^\star}$ is stable, using only the observed post-drift window (no access to model internals or post-drift test labels). Ideally,
\[
n^\star \;=\; \min\bigl\{\,n\ge 1:\; \Pr\!\big(\mathcal{L}_{\mathrm{gen}}(f_{\data_{t_s+n}})\le \varepsilon\big)\ge 1-\delta \,\bigr\},
\]
where $f_{\data}$ is the predictor retrained on $\data$, and $\mathcal{L}_{\mathrm{gen}}$ denotes post-drift generalization loss. Because $\mathcal{L}_{\mathrm{gen}}$ is unobservable online, the problem reduces to designing a model-agnostic, data-side stopping criterion $\stopping(\data_t)\in\{0,1\}$ and selecting
\[
n^\star \;=\; \min\bigl\{\,n\ge 1:\; \stopping(\data_{t_s+n})=1 \,\bigr\}.
\]
\end{problem}

This definition is intentionally idealized: in a streaming setting the post-drift
generalization loss $L_{\mathrm{gen}}$ is not observable online, and thus $n^\star$ cannot be
computed directly. Instead, we must rely on a data-side surrogate that can be
evaluated on the post-drift window alone. Concretely, our goal is to design a model-agnostic
stopping rule $R(X_t)\in\{0,1\}$ such that the smallest time $t$ with $R(X_t)=1$
closely approximates the ideal $n^\star$ in Problem~1. The subsequent sections construct
such a criterion based on state dependence and effective sample size and establish conditions
under which it reliably predicts sufficient post-drift data for retraining.

The next subsection introduces our method: a concrete stopping criterion $\stopping(\cdot)$ and an efficient online algorithm with which to estimate it from the stream and trigger stable retraining—both detector- and model-agnostic, and requiring no post-drift labels.

\subsection{\method: Cumulative Assessment of Locality Indicator for Post-drift Estimation of Retraining Data Size}
We specify (i) a concrete data-side criterion $\stopping(\cdot)$ for stable retraining and (ii) an efficient online algorithm that sequentially estimates $\stopping(\cdot)$ from the stream, resolving the early/late trigger trade-off in a data-driven manner. In Appendix \ref{apsec:algorithm}, we provide Algorithm \ref{alg:dysaw}, which is an overview of our algorithm.

\subsubsection{Stopping Criterion}
We define a model-agnostic, data-side stopping criterion $\stopping(\data_t)\!\in\!\{0,1\}$ on the post-drift window $\data_t=\{\mathbf{x}(t_s),\ldots,\mathbf{x}(t)\}$. 
The criterion fires when (a) the current neighborhood is sufficiently populated and (b) increasing locality consistently improves one-step predictability, indicating state dependence and local regularity. 
When $\stopping(\data_t)=1$, we trigger stable retraining on $\data_t$. 

\subsubsection{Online Estimation Algorithm (\method)}
At each $t \ge t_s+1$ we execute the following four steps. All operations in
Steps~(ii) and~(iii) are performed for each value of the locality parameter
$\theta$ in a fixed grid $\Theta=\{\theta_0,\ldots,\theta_{\max}\}$, yielding a sequence
of localized predictors and prediction errors indexed by $\theta$.

\mypara{(i) Window Normalization and Split}
We normalize the post-drift window to obtain $Z\in\mathbb{R}^{n_t\times d}$.
This normalization is applied within the current post-drift window and is used
only to stabilize distances for the locality kernel; CALIPER never compares distances
across different windows, so the underlying drift dynamics are not masked by this
rescaling.
Normalize the post-drift window to obtain $\mathbf{Z}\in\mathbb{R}^{\datalen\times d}$. Define
\[
\data_h=\mathbf{Z}[1{:}(\datalen{-}2)],\quad
\fdata_h=\mathbf{Z}[2{:}(\datalen{-}1)],\quad
(\datavec_q,\fdatavec_q)=(\mathbf{z}(\datalen{-}1),\,\mathbf{z}(\datalen)).
\]
Thus $(\data_h,\fdata_h)$ provides $\datalen{-}2$ reference pairs $(\mathbf{z}(s),\mathbf{z}(s{+}1))$, and $(\datavec_q,\fdatavec_q)$ is the current query pair.

\mypara{(ii) ESS Check}
Fix a short locality grid $\Theta=\{0,\ldots,\theta_{\max}\}$ with $\theta_{\max}$ giving the tightest locality.
Let the raw distances be $r^{\mathrm{raw}}_i=\|\data_h^{(i)}-\datavec_q\|$ and define
$D=\mathrm{mean}\big(\{r^{\mathrm{raw}}_i\}_i\big)$. Using the scaled distances
$r_i = r^{\mathrm{raw}}_i / D$ (these $r_i$ are sample-to-query distances, distinct from the effective radius $r^{\mathrm{eff}}(\theta;\tau)$),
set kernel weights $w_i(\theta)=\exp(-\theta\,r_i)$.
Compute the effective sample size (ESS) at the tightest locality:
\[
\mathrm{ESS}(\theta_{\max}):=\frac{\big(\sum_i w_i(\theta_{\max})\big)^2}{\sum_i w_i(\theta_{\max})^2}.
\]
Proceed only if $\mathrm{ESS}(\theta_{\max})\ge C\,(d{+}1)$. Because the kernel weights are $w_i(\theta)=\exp(-\theta r_i)$ with $r_i\ge 0$, the
effective sample size $\mathrm{ESS}(\theta)$ is monotonically non-increasing in $\theta$:
larger $\theta$ concentrates more weight on fewer neighbors. As a consequence,
$\mathrm{ESS}(\theta_{\max})$ is the smallest ESS value on the grid. Checking the gate
only at $\theta_{\max}$ therefore guarantees that $\mathrm{ESS}(\theta_k)\ge C(d+1)$
for all $\theta_k\le\theta_{\max}$.

\mypara{(iii) Weighted Local Regression (WRL)}
We fit a lightweight weighted local regression model around the current query point,
using kernel weights $w_i(\theta)$ to emphasize nearby samples.
Augment references with a bias: $\data_{\mathrm{aug}}=[\data_h\ |\ \mathbf{1}]\in\mathbb{R}^{(\datalen-2)\times p}$ with $p=d{+}1$, and let $\datavec_{\mathrm{aug}}=[\datavec_q\ |\ 1]$.  
For each $\theta\in\Theta$, form
\[
\mathbf{W}_\theta=\mathrm{diag}(w_i(\theta)),\quad
\mathbf{A}_\theta=\data_{\mathrm{aug}}^\top \mathbf{W}_\theta \data_{\mathrm{aug}},\quad
\mathbf{B}_\theta=\data_{\mathrm{aug}}^\top \mathbf{W}_\theta \fdata_h,
\]
solve the small system $\boldsymbol{\beta}_\theta=\mathbf{A}_\theta^{-1}\mathbf{B}_\theta$. 

\mypara{(iv) Test \& Trigger}
Compute the query prediction and one-step proxy error
\[
\hat{\mathbf{y}}_\theta \;=\; \datavec_{\mathrm{aug}}^\top \boldsymbol{\beta}_\theta,
\qquad
e_{(t,\theta)} \;=\; \bigl\|\fdatavec_q - \hat{\mathbf{y}}_\theta\bigr\|.
\]
Accumulate the proxy error in the original units,
\[
E_{(t,\theta)} \;=\; E_{(t-1,\theta)} + e_{(t,\theta)}.
\]
A smaller $\theta$ yields broader (more global) averaging, whereas a larger $\theta$ focuses on nearer neighbors; under state dependence, increasing $\theta$ should reduce error until neighborhoods become too sparse—an effect controlled by the ESS gate. 
Finally, on the ordered grid $\Theta=\{\theta_k\}$, test monotonicity:
\[
E_{(t,\theta_k)} \;\ge\; E_{(t,\theta_{k+1})}\quad \forall k.
\]
If the test holds, set $\stopping(\data_t)=1$ and trigger retraining on the current post-drift window $\data_t$; otherwise, continue streaming.

\subsection{Theoretical Analysis for \method}
\label{subsec:theory}
We provide formal guarantees explaining why the \method introduced in this work is useful for estimating the amount of data needed for retraining after sudden concept drift.
In particular, we link CALIPER's trigger—monotonicity of localized one-step prediction error under a sufficient effective sample size (ESS)—to a rigorous notion of state dependence, 
and we provide an interpretation for why stronger state dependence can correlate with more stable retraining on an appropriate local region.
We begin by formalizing the setting and introducing the key quantities used in our analysis.

\mypara{Setting}
We consider a $d$-dimensional time series $\{\statevec(t)\}_{t\ge 0}\subset\mathbb{R}^d$ generated by
\begin{equation}
\statevec(t+1) \;=\; f(\statevec(t)) + \xi_t,
\qquad t=0,1,2,\dots
\label{eq:dynamics}
\end{equation}
where $f:\mathbb{R}^d\to\mathbb{R}^d$ is locally $L$-Lipschitz and $\{\xi_t\}$ is a zero-mean noise process with sub-Gaussian coordinates and covariance bounded by $\sigma^2 I_d$.
When needed for concentration, we assume $\beta$-mixing with summable coefficients.

\mypara{Locality parameterization ($\theta$ vs.\ effective radius)}
CALIPER is implemented using the locality parameter $\theta$ through the kernel
$w_i(\theta)=\exp(-\theta\,\|\statevec_i-\statevec\|)$ (after window-wise normalization/scaling).
For the theoretical analysis, it is sometimes convenient to index locality by an equivalent radius.
Fix a threshold $\tau\in(0,1)$ and define the effective radius
\[
r^{\mathrm{eff}}(\theta;\tau)\;:=\;\frac{\log(1/\tau)}{\theta}\qquad(\theta>0),
\]
so that $\exp(-\theta r)\le \tau$ holds for all $r\ge r^{\mathrm{eff}}(\theta;\tau)$.
Thus increasing $\theta$ corresponds to tighter locality (smaller effective radius).
Accordingly, the implementation grid $\Theta=\{\theta_k\}_{k=1}^K$ induces a radius grid
$\{r_k\}_{k=1}^K$ via $r_k=r^{\mathrm{eff}}(\theta_k;\tau)$ (with $\theta\to 0$ interpreted as the global limit).
In what follows, we use $\theta_k$ and the corresponding $r_k$ interchangeably.

\begin{definition}[State dependence]
For $\statevec\in\mathbb{R}^d$ and radius $r>0$, and for a fixed constant $c\ge L$, define
\begin{equation}
\alpha(\statevec,r;c) 
\;:=\;
\Pr\!\big(\|\prstatevec_+ - \statevec_+^{}\|\le c\,r\,|\,\|\prstatevec-\statevec\|\le r \big),
\label{eq:alpha}
\end{equation}
where $\prstatevec_+=f(\prstatevec)+\xi'$ and $\statevec_+=f(\statevec)+\xi$.
Intuitively, a neighborhood that typically remains a neighborhood (up to factor $c$) after one step exhibits state dependence.
Given a compact set $B\subset\mathbb{R}^d$, we say a window $\state=\{\statevec(t)\}_{t\in\mathcal{I}}$ exhibits state dependence on $B$ at scale $r$ if
\[
\inf_{\statevec\in B}\alpha(\statevec,r;c) \;\ge\; \underline{\alpha}
\quad \text{for some }\underline{\alpha}\in(0,1).
\]
\end{definition}

\begin{proposition}[CALIPER-triggered windows exhibit stronger state dependence]
\label{prop:caliper-stronger-state-dep}
Fix a compact set $B\subset\mathbb{R}^d$ and a constant $c\ge L$ as in \eqref{eq:alpha}.
Let $\Theta=\{\theta_k\}_{k=1}^K$ be CALIPER's locality grid ordered as
$0<\theta_1<\cdots<\theta_K=\theta_{\max}$, and let $\{r_k\}$ be the induced effective-radius grid
defined above by $r_k=r^{\mathrm{eff}}(\theta_k;\tau)$ for a fixed $\tau\in(0,1)$, so that
$r_1>\cdots>r_K=r_{\min}$.
Fix an index $j\in\{1,\ldots,K\}$ and set $r:=r_j$.

Consider two data windows $\data^{}=\{\datavec(t)^{}\}_{t=0}^{N}$ and $\prdata=\{\prdatavec(t)\}_{t=0}^{N}$
extracted from the same process \eqref{eq:dynamics}.
Suppose $\data$ passes CALIPER's monotone locality test with a positive margin across $\Theta$
and meets the ESS gate at the tightest locality $\theta_{\max}$ (equivalently, at $r_{\min}$),
while $\prdata$ fails the same monotone test (also with a positive margin) and meets the ESS gate at $\theta_{\max}$.
Then, for any $\delta\in(0,1)$, there exist constants $\underline\alpha,\overline\alpha\in(0,1)$ and $\Delta>0$
(depending only on $f$, $c$, the noise envelope $\sigma^2$, the grid $\Theta$ (equivalently $\{r_k\}$),
the test margins, and the ESS threshold) such that, with probability at least $1-\delta$,
\begin{equation}
\inf_{\datavec\in B}\alpha(\datavec,r;c)\ \ge\ \underline\alpha,
\qquad
\sup_{\prdatavec\in B}\alpha(\prdatavec,r;c)\ \le\ \overline\alpha,
\qquad
\underline\alpha-\overline\alpha\ \ge\ \Delta.
\label{eq:dominance-gap}
\end{equation}
In particular, $\data$ exhibits state dependence on $B$ at scale $r$ in the sense of \eqref{eq:alpha},
whereas $\prdata$ is uniformly less state dependent by a nontrivial margin.
\end{proposition}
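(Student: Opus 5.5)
The plan is to treat the monotone-locality test as a noisy estimate of a population-level quantity: the expected localized one-step error at each radius $r_k$. The core of the argument is a deterministic link between that population error curve and the state-dependence coefficient $\alpha(\cdot,r;c)$.

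\textbf{Step 1 (population curve).} Write $\mathcal{E}(\mathbf{x},r_k)$ for the expected squared one-step error of the weighted local affine fit at locality $\theta_k$ around query $\mathbf{x}$. Condition on the event $\{\|\mathbf{s}'-\mathbf{s}\|\le r\}$ and decompose the error of a reference successor $\mathbf{s}'_+$ relative to $\mathbf{s}_+$:
\begin{itemize}
\item with probability $\alpha(\mathbf{x},r;c)$, the discrepancy is at most $cr$;
\item otherwise, it is controlled only by the global spread, bounded on the compact $B$ via the Lipschitz constant and $\sigma^2$.
\end{itemize}
Weights outside $r^{\mathrm{eff}}$ are at most $\tau$ and contribute an $O(\tau)$ remainder. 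This sandwiches the localized error between two quantities:
\begin{equation*}
\sigma^2 d + c^2 r_k^2\,\alpha + M_B(1-\alpha) - O(\tau)
\quad\text{and}\quad
\sigma^2 d + c^2 r_k^2 + M_B(1-\alpha) + O(\tau),
\end{equation*}
where $M_B$ is the out-of-neighborhood second moment. Taking differences between consecutive grid points, the slope $\mathcal{E}(r_{k})-\mathcal{E}(r_{k+1})$ has sign controlled by an explicit affine function of $\alpha$ at the two scales. If $\alpha$ exceeds a threshold $\alpha^\ast$, shrinking the radius strictly reduces the error; if $\alpha$ is below a second threshold, it does not, up to margin terms.

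\textbf{Step 2 (concentration).} Show that the empirical cumulative errors $E_{(t,\theta_k)}$ are within $\epsilon$ of their population counterparts, uniformly over the finite grid and over queries in $B$, with probability $1-\delta$. The ingredients are:
\begin{itemize}
\item the ESS gate $\mathrm{ESS}(\theta_{\max})\ge C(d+1)$, which keeps $\mathbf{A}_\theta$ well conditioned at every $\theta_k$ by the monotonicity of ESS noted in step (ii);
\item sub-Gaussian noise;
\item a blocking argument for $\beta$-mixing data with summable coefficients;
\item a union bound over the $K$ grid points.
\end{itemize}
Choose $\epsilon$ smaller than half the test margins.

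\textbf{Step 3 (transfer).} Because $\mathbf{X}$ passes the test with a positive margin, the population curve is strictly decreasing by at least margin$-2\epsilon$. Inverting Step 1 gives $\inf_{B}\alpha(\mathbf{x},r_j;c)\ge\underline\alpha$. Likewise, $\overline{\mathbf{X}}$ fails the test by a margin at some index, which gives $\sup_B\alpha\le\overline\alpha$. Setting $\Delta$ from the two margins and the slope constant $c^2(r_k^2-r_{k+1}^2)$ yields $\underline\alpha-\overline\alpha\ge\Delta$.

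The main obstacle is that the failing index for $\overline{\mathbf{X}}$ need not equal $j$, and the passing test gives information about differences across all scales rather than about $\alpha$ at scale $r_j$ directly. To handle this, I would first try to use monotonicity of $\alpha(\mathbf{x},r;c)$ in $r$ (nested balls, with $c\ge L$), so that a bound at any grid scale propagates to $r_j$. If that monotonicity fails in general, I would absorb the discrepancy into the constants by taking worst cases over the finite grid. This is permitted because the constants are allowed to depend on $\Theta$, and it is where the stylized nature of the claim is used.

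A second delicate point is that the infimum and supremum over $B$ require the query set to cover $B$. I would state this as part of the event on which the ESS gate holds, namely that every point of $B$ has enough neighbors at $r_{\min}$.
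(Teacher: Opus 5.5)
\textbf{There is a genuine gap, and it sits in your Step~1, which everything else rests on.}

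Your Steps 2--3 follow the paper's proof essentially line by line: an ESS-gated uniform deviation $\varepsilon_W$ from sub-Gaussianity, $\beta$-mixing and a union bound over the grid; transfer of the test margins to the population curve up to $2\varepsilon_W$; then reading off $\underline\alpha$ and $\overline\alpha$. The paper bridges population monotonicity and $\alpha$ only by a one-line contradiction. Your Step~1 tries to make that bridge quantitative, and it fails for three reasons.

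\emph{The lower half of your sandwich is unjustified.} On $\{\|\mathbf{s}'_+-\mathbf{s}_+\|\le cr\}$ the squared discrepancy can be anything in $[0,c^2r^2]$, so that event contributes nothing to a lower bound. $M_B$, as you define it, is an upper envelope and cannot appear in a lower bound. And by Jensen, pairwise discrepancies only upper-bound the error of a weighted average of successors. The far-field remainder is not $O(\tau)$ either: each far weight is at most $\tau$, but their total relative to the in-radius mass is uncontrolled.

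\emph{The decomposition describes the wrong predictor.} It is the error of a local-constant predictor, while CALIPER fits a weighted local \emph{affine} model. Weighted least squares with an intercept reproduces affine maps exactly. So the population error at locality $\theta$ is noise, plus a bias set by the curvature of $f$ over the effective neighborhood, plus an estimation variance that grows as $\mathrm{ESS}(\theta)$ shrinks. The deterministic part of the successor discrepancy that $\alpha$ compares with $cr$ is, to first order, linear in $\mathbf{s}'-\mathbf{s}$, and that is exactly what the affine fit removes.

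\emph{The threshold dichotomy you want to invert is false in both directions.} Two examples show this.
\begin{itemize}
\item Take $f(\mathbf{s})=A\mathbf{s}$ with $\|A\|<c$ and noise small compared with $(c-\|A\|)r_j$. Then $\alpha(\cdot,r_j;c)\approx 1$. Yet the affine fit is unbiased at every $\theta$, so the population curve is flat. At finite sample size it rises as $\mathrm{ESS}(\theta)$ falls, so the test fails.
\item Take $f$ curved and $\sigma\gg c\,r_j$. Then $\alpha(\cdot,r_j;c)$ is at most of order $(c\,r_j/\sigma)^d$ for Gaussian noise. Yet a long enough window passes with margin, because the curvature bias shrinks with $\theta$ while the variance terms become negligible. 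A short window from the same process typically fails.
\end{itemize}
Pass or fail is governed by curvature versus effective sample size, not by $\alpha$. Neither monotonicity of $\alpha$ in $r$ nor worst-casing over the grid can therefore produce $\underline\alpha>\overline\alpha$.

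\emph{Your plan never says how $\alpha$ depends on the window.} If $\alpha$ is evaluated under the law of the process, $\mathbf{X}$ and $\overline{\mathbf{X}}$ share the same function $\alpha(\cdot,r;c)$. Then $\inf_B\alpha\ge\underline\alpha>\overline\alpha\ge\sup_B\alpha$ is impossible for nonempty $B$. A proof must make $\alpha$ window-dependent (e.g.\ $\mathbf{s}'$ drawn from the window), and your plan does not specify this.

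\emph{Two further issues are shared with the paper.}
\begin{itemize}
\item The gate only guarantees $\mathrm{ESS}(\theta_{\max})\ge C(d+1)$, a fixed constant. So $\varepsilon_W$ is not small and cannot be ``chosen'' below half the margins; that has to be a hypothesis.
\item ESS constrains the weights, not the geometry. It does not keep $\mathbf{A}_\theta$ well conditioned when nearby references lie close to a lower-dimensional set such as an attractor.
\end{itemize}

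The paper's contradiction steps assert essentially the same two implications without a mechanism, so following them will not supply the missing lemma. Closing the gap needs additional assumptions tying the bias--variance profile of the local affine fit to $\alpha$.
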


\mypara{Proof sketch of Proposition~\ref{prop:caliper-stronger-state-dep}}
Let $\Theta=\{\theta_k\}_{k=1}^K$ be CALIPER's locality grid (equivalently, the induced effective-radius grid $\{r_k\}$),
and write $\hat{E}_k(\state)$ (empirical) and $E_k(\state)$ (population) for the localized one-step error
at locality $\theta_k$ (equivalently, at radius $r_k$) on a window $\state$.
Because the ESS gate holds at the tightest locality $\theta_{\max}$ for both $\data$ and $\prdata$,
sub-Gaussian concentration under $\beta$-mixing gives the uniform deviation
\[
\sup_k\big|\hat{E}_k(\state)-E_k(\state)\big|\le \varepsilon_W(\state),
\qquad
\varepsilon_W(\state)=O\!\Big(\sqrt{\tfrac{\log(K/\delta)}{\mathrm{ESS}(\theta_{\max},\state)}}\Big).
\]
On $\data$, the monotone test passes with a positive margin: there exists $\tau_{\data}>0$ such that
$\hat{E}_{k+1}(\data)\le \hat{E}_k(\data)-\tau_{\data}$ for all $k$, hence
$E_{k+1}(\data)\le E_k(\data)-\big(\tau_{\data}-2\varepsilon_{W}(\data)\big)$.
Thus shrinking the radius strictly decreases the population localized error.
If many pairs with $\|\datavec'-\datavec\|\le r$ violated $\|\datavec'_+ - \datavec_+\|\le c r$, such a decrease could not persist (the deterministic part is absorbed by $c\ge L$), which forces
$\inf_{\datavec\in B}\alpha(\datavec,r;c)\ge \underline{\alpha}$.
Conversely, on $\prdata$ the test fails with a positive margin: there exist $k^\star$ and $\tau_{\prdata}>0$ such that
$\hat{E}_{k^\star+1}(\prdata)\ge \hat{E}_{k^\star}(\prdata)+\tau_{\prdata}$, hence
$E_{k^\star+1}(\prdata)\ge E_{k^\star}(\prdata)+\big(\tau_{\prdata}-2\varepsilon_{W}(\prdata)\big)$,
which is incompatible with most neighbor pairs remaining $c r$-close after one step; therefore
$\sup_{\prdatavec\in B}\alpha(\prdatavec,r;c)\le \overline{\alpha}<\underline{\alpha}$.
Taking $\Delta:=\underline{\alpha}-\overline{\alpha}>0$ yields \eqref{eq:dominance-gap}. \qed

\mypara{Remark (State dependence and learnability of retraining)}
\label{prop:learnability}
The role of state dependence in retraining can be understood through data-dependent generalization bounds.
In particular, results such as \citep{ColinNeurips2019} bound the test--train gap for MLP predictors
by a term that depends on empirical quantities measured on the training window (e.g., hidden-layer norms
and interlayer Jacobian norms). Abstracting these into a single nonnegative complexity term $\mathcal{C}(\state)$,
one can write (up to logarithmic factors)
\begin{equation}
E_{\rm te}(h_\psi)-E_{\rm tr}(h_\psi)\ \lesssim\ \mathcal{C}(\state)\,n^{-1/2}.
\label{eq:prop2}
\end{equation}
Heuristically, when a window is more state dependent on $(B,r)$, radius-$r$ neighbors tend to remain neighbors
(after one step) more frequently, so accurate one-step fitting on $B$ can be achieved with less local variation.
This tends to reduce empirical Jacobians/norms and hence the data-dependent term $\mathcal{C}(\state)$,
making the bound \eqref{eq:prop2} tighter. Consequently, CALIPER-triggered windows (which indicate stronger
state dependence via Proposition~\ref{prop:caliper-stronger-state-dep}) are expected to be more favorable for
stable retraining on $B$.

\mypara{Discussion of assumptions and scope}
The dynamical-systems setting in (1) and the local Lipschitz and $\beta$-mixing conditions are used only for our analysis of CALIPER’s trigger, not as requirements of the algorithm itself. In practice, the observed state $\datavec(t)$ may include a short history of the stream (e.g., via delay embedding), so a first-order Markov model can hold for this augmented state even when the original process depends on $(\datavec(t-k), \dots, \datavec(t))$. Our distance-based neighborhoods operate on normalized features and are guarded by the $ESS(\theta_{\max}) \ge C(d+1)$ gate, so CALIPER naturally asks for larger windows in higher dimensions; when the intrinsic dimension is very large, combining CALIPER with standard dimensionality reduction is sensible. Overall, we treat these as regularity assumptions for the theory, while the algorithm itself applies more broadly, as illustrated by our chaotic and noisy benchmarks.

\section{Experimental Results}
\label{sec:04_exp}
\begin{figure}[t]
    \centering
    \includegraphics[width=\linewidth]{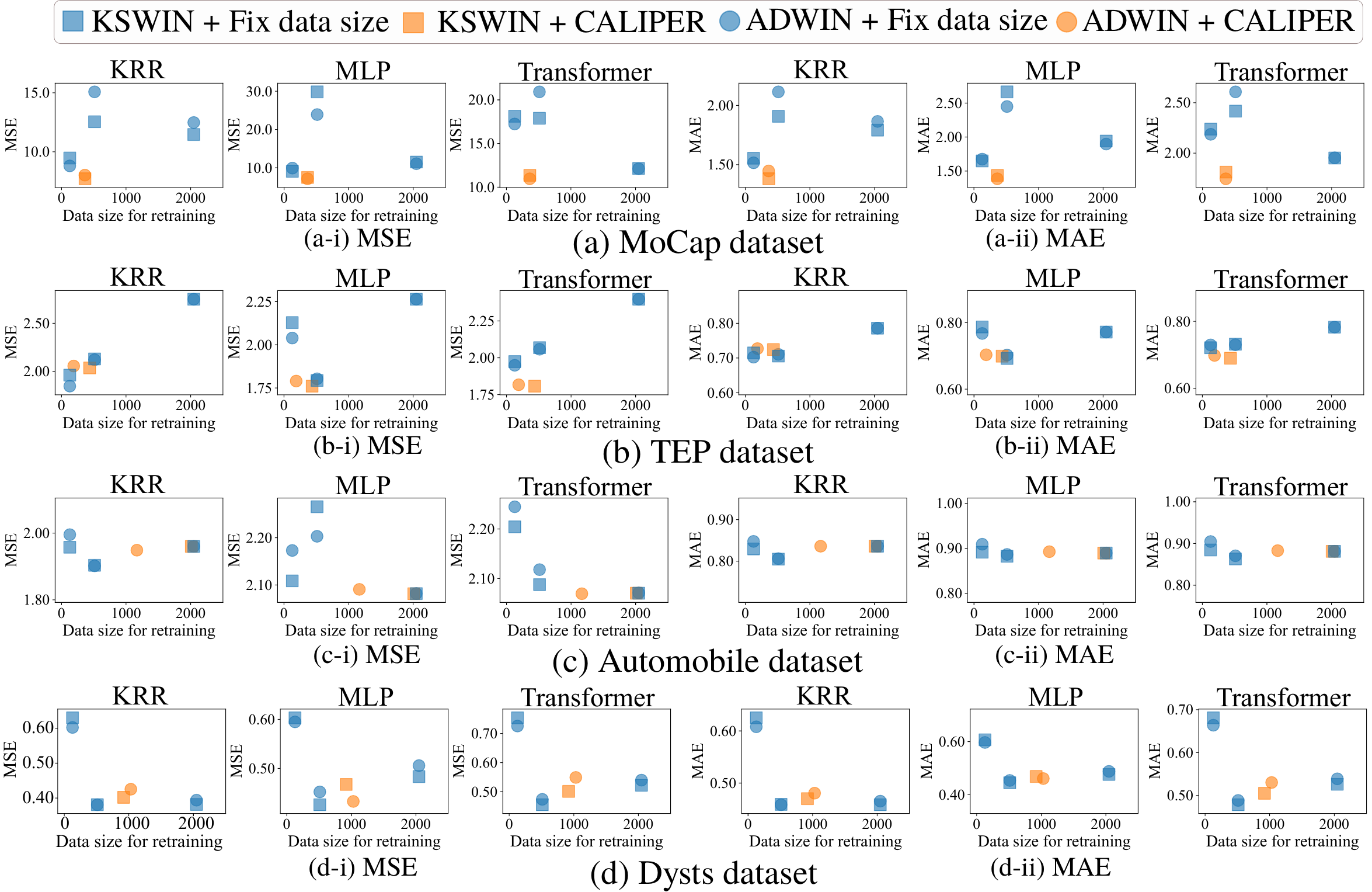}
    \caption{
    Performance of \method on four datasets (MoCap, TEP, Automobile, Dysts) and three model families (KRR, MLP, Transformer). We compare fixed data sizes (128/512/2048; blue) with \method (orange, “CALIPER”). Each panel reports MSE (left, “-i”) and MAE (right, “-ii”) as a function of the retraining data size after each drift detected by ADWIN (circles) or KSWIN (squares). \method matches or exceeds the best fixed data size without per-dataset tuning; notably, the data size selected by \method typically aligns with the dataset-specific optimal fixed data size. See Tables \ref{tab:fullresult-KSWIN} and \ref{tab:fullresult-ADWIN} for full results in Appendix~\ref{apseq: additional-result}.
    }
    \label{fig:result_fig1}
\end{figure}

We evaluate \method through experiments designed to answer three questions:
(Q1) \textbf{Effectiveness}—how accurately does \method estimate the data required to retrain a model after a detected drift?
(Q2) \textbf{Scalability}—what is the computational overhead of \method as data increase?
(Q3) \textbf{Adaptation}—under sudden drift, does retraining with \method outperform incremental updates?

\mypara{Datasets}
We use four datasets from different domains: 
(a) \textbf{MoCap}, sequences from the CMU Motion Capture Database\footnote{\url{http://mocap.cs.cmu.edu/}}; 
(b) \textbf{TEP}, the Tennessee Eastman Process—a benchmark discrete-time simulation of a chemical plant \citep{TEP}; 
(c) \textbf{Automobile}, five synchronized vehicle sensors (accelerometer, speed, $G_x$, $G_y$, $G_z$) across multiple driving courses; and (d) \textbf{Dysts}, time series from the \textsc{dysts} library \citep{dysts} covering chaotic systems with known dynamical properties. Experimental settings are detailed in Appendix~\ref{apseq: exp-setting}.

\mypara{Experimental Setup}
The experimental framework employed multiple algorithms and drift detectors. The base learners included kernel ridge regression (KRR), MLP, and Transformer. We also used ADWIN \citep{ADWIN} and KSWIN \citep{KSWIN} for drift detection. Performance metrics included prediction mean squared error (MSE) and mean absolute error (MAE). Each dataset was subjected to multiple independent runs with random seeds, and all algorithms within each run shared the same random seed initialization. Our experimental settings are detailed in Appendix~\ref{apseq: exp-setting}.

\subsubsection*{(Q1) Effectiveness}
Fig.~\ref{fig:result_fig1} compares fixed data sizes (128/512/2048; blue) with \method (“CALIPER”, orange) across four datasets (MoCap, TEP, Automobile, Dysts), three model families (KRR, MLP, Transformer), two drift detectors (ADWIN/KSWIN; circles/squares), and two metrics (MSE “-i”, MAE “-ii”). Each panel plots error as a function of the retraining data size used after each detected drift; for \method, the x-value is the average data size consumed per retraining.
Across datasets, detectors, and architectures, \method sits near the best fixed point, matching or exceeding the prediction accuracy of the strongest fixed data size without per dataset tuning. On MoCap, it consistently attains the panel-wise minimum; on TEP, Automobile, and Dysts, it remains competitive with the best fixed choice. Overall, selecting the data size materially impacts accuracy, and \method provides near-optimal choices across conditions. Even when a fixed data size numerically wins, that merely shows ex post that it happened to be optimal—not that we can know the stream-time choice to be optimal at the time. Moreover, the fixed size with the best prediction accuracy varies widely. The fixed size with the best prediction accuracy varies widely. For example, the MLP model achieves its highest accuracy on TEP with a data size of 512, whereas the same setting yields the lowest performance on MoCap. These results underscore the brittleness of a priori data size selection. 
In contrast, \method selects data sizes data-dependently near the optimum and typically achieves best- or second-best accuracy, thereby preserving the method’s practical value.
For a tree-based learner, CALIPER exhibits the same qualitative behavior, with estimated window sizes remaining close to empirically optimal choices. These tree-based results, together with full numerical results averaged over horizons 1, 15, and 30 (Tables~\ref{tab:fullresult-KSWIN} and~\ref{tab:fullresult-ADWIN}) and a hyperparameter-sensitivity study of CALIPER’s locality grid $\theta$ and ESS threshold $C$ for the MLP base learner (showing that selected window sizes and post-drift errors are stable over a wide range of settings), are all reported in Appendix~\ref{apseq: additional-result}.

\begin{figure}[t]
    \centering
    \includegraphics[width=\linewidth]{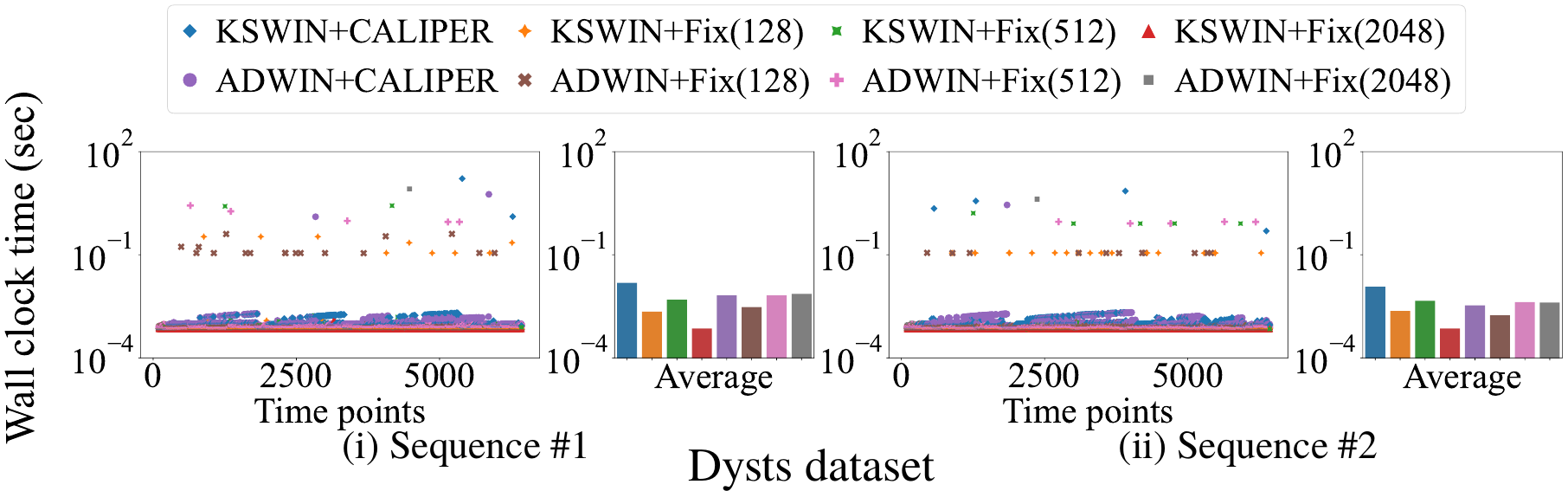}
    \caption{
    Per time step and average wall clock time on Dysts sequences \#1 and \#2 for ADWIN/KSWIN with \method and fixed-size buffers (128/512/2048); curves are flat with low means, and occasional spikes reflect retraining rather than \method.
    }
    \label{fig:scale_t}
\end{figure}
\begin{table}[t]
\centering
\caption{Adaptation after drift. CALIPER-triggered retraining vs. Incremental updating. MSE/MAE averaged over (1,15,30) with the past sequence length is 30; detectors ADWIN/KSWIN; models KRR/MLP/Transformer. Lower is better; best bold, second underlined. Tables \ref{tab:fullresult-KSWIN} and \ref{tab:fullresult-ADWIN} in the Appendix for the full results.}
\label{tab:incremental}
\resizebox{1.0\linewidth}{!}{
\begin{tabular}{llcccccccc}
\toprule
\multirow{2}{*}{\textbf{Model}} & \multirow{2}{*}{\textbf{Detector}} & \multicolumn{2}{c}{MoCap} & \multicolumn{2}{c}{TEP} & \multicolumn{2}{c}{Automobile} & \multicolumn{2}{c}{Dysts}\\
\cmidrule(lr){3-4} \cmidrule(lr){5-6} \cmidrule(lr){7-8} \cmidrule(lr){9-10}
& & MSE & MAE & MSE & MAE & MSE & MAE & MSE & MAE\\
\midrule
MLP & ADWIN & \textbf{7.106} &\textbf{1.387} & 1.790 & 0.704 & 2.090 & 0.892 & \textbf{0.432} & \textbf{0.460} \\
& KSWIN & 7.449 & 1.436 & \textbf{1.760} & \textbf{0.699} & \textbf{2.081} & \textbf{0.889} & 0.467 & 0.468 \\
& (Incremental) & 412.6 & 8.462 & 3.699 & 0.879 & 2.406 & 0.894 & 71.75 & 1.524 \\
\cmidrule(lr){2-10}
Transformer & ADWIN & \textbf{10.96} & \textbf{1.744} & 1.818 & 0.698 & 1.948 & 0.883 & 0.549 & 0.530 \\
& KSWIN & 11.35 & 1.810 & \textbf{1.808} & \textbf{0.690} & 2.070 & 0.881 & \textbf{0.501} & \textbf{0.505} \\
& (Incremental) & 22.08 & 2.654 & 2.376 & 0.767 & \textbf{1.866} & \textbf{0.773} & 0.582 & 0.585 \\
\bottomrule
\end{tabular}
}
\end{table}

\subsubsection*{(Q2) Scalability}
We measure wall clock time on the Dysts dataset using two sequences (\#1, \#2) and eight detector–strategy combinations (ADWIN/KSWIN paired with either \method or fixed data size of 128/512/2048), with an MLP as the base learner. Fig.~\ref{fig:scale_t} reports per time step wall clock time (log-seconds) along with the corresponding averages. Across all combinations, the wall clock time curves are essentially flat and the mean cost per step is small; occasional spikes align with model retraining and are not caused by \method. Overall, this indicates that \method adds negligible overhead relative to the base learner, the detector, and the fixed-length baseline, consistent with our use of lightweight local regression. Additional results appear in Appendix~\ref{apseq: additional-result}.

\subsubsection*{(Q3) Adaptation}
Table~\ref{tab:incremental} compares \method with an incremental-update baseline (online SGD; the same architecture, no explicit retraining). Results are averaged over horizons $(1,15,30)$ with a past sequence length of 30.
Overall, \method matches or surpasses incremental updates across datasets and models.
For \textbf{MLP}, \method yields large gains on MoCap and Dysts (e.g., MoCap: MSE $7.106$ vs.\ $412.6$; Dysts: $0.432$ vs.\ $71.75$), indicating that purely local incremental steps can be unstable under drift.
For \textbf{Transformer}, \method is competitive on all datasets and superior on TEP and Dysts (e.g., Dysts: MSE $0.501$ with KSWIN vs.\ $0.582$ incremental); on Automobile, incremental performs slightly better (MSE $1.866$ vs.\ $1.948$–$2.070$), but the gap is modest.
Taken together, these results suggest that (a) selecting an appropriate retraining data size at each drift materially improves accuracy and stability, and (b) pure incremental updates are often insufficient in sudden drift.

\section{Related Work}
\label{sec:08_related}
\subsection{Concept Drift}
In dynamically changing and non-stationary environments, the data distribution can change over time, yielding the phenomenon of concept drift. Concept drift is a phenomenon in which the statistical properties of a region of interest change over time in an arbitrary manner \citep{driftstatical}.
It was first proposed by \citep{driftfirst}, who pointed out that noise data can become non-noise information at different times. Such changes could be caused by changes in hidden variables \citep{driftlatent} that cannot be measured directly.
Strategies for updating existing training models in response to the drift caused by such system changes can be categorized into two main groups: retraining and model adjustment. Each of these aims to address different types of drift. 
Here, we primarily focus on the retraining analysis aspect and summarize the model adjustment based approaches in Appendix \ref{apsec:related}.
Window-based detectors (e.g., ADWIN, KSWIN) are standard in streaming settings, where they compare recent and historical windows to flag drift \citep{ADWIN, KSWIN}; see the introduction for a brief survey. Crucially, these detectors respond to drift but not how much post-drift data are required for reliable retraining—a gap our work addresses with a post-drift data sufficiency criterion.

\subsection{Analysis of Nonlinear Dynamics}
Inspired by Wold's theorem \citep{Woldphd}, time series data $X_t$ can be formally decomposed as $X_t = \sum^{\infty}_{j=0} b_j\epsilon_{t-j}+\eta_t$.
Here, $\eta_t$ represents the deterministic component, and $\epsilon_t$ represents the stochastic component as a stationary process input to the linear filter $b_j$. In many tasks—especially time-series forecasting—the deterministic part is crucial: long-horizon trends capture low-frequency evolution. This deterministic structure is often modeled with flexible function classes, including programmatically discovered structures \citep{SINDy, SR3, MIOSINDy, LaNoLem} and analyzed for description and prediction. A central phenomenon in nonlinear dynamics is state dependence \citep{State,Simplex}, which measures how similarly futures unfold when present states are similar; operationally, a series exhibits state dependence if forecasts from a model trained locally around a point $x$ surpass those from a global model trained on all data. This property underpins short-term prediction for nonlinear series, exemplified by S-Map \citep{SMap1,SMAP3,SMap4,SMap5}, which computes distances between a target state and a historical library and performs distance-weighted regression for forecasting. While such analyses rely on mild assumptions, a key advantage is their independence from a posited generative model. Building on this insight, our work repurposes state dependence to assess data sufficiency. Beyond offering a model-agnostic criterion that estimates the minimum data requirement without requiring knowledge of the parametric system, it also provides a practical estimation algorithm that enables deployment in real-world settings.

\section{Conclusion}
\label{sec:07_conclusion}
In this paper, we introduced CALIPER, a detector- and model-agnostic, data-only framework that returns the earliest post-drift window that satisfies a data-side stopping rule, used as a proxy for retraining readiness.
Rather than probing or stress-testing a downstream model, \method enforces a simple, verifiable stopping rule on the stream itself: it checks that local neighborhoods are sufficiently populated (via an effective sample size gate) and that one-step predictability improves monotonically as neighborhoods become more local, all computed in a single pass using low-overhead weighted local regressions. 
Our theory shows that this trigger is not merely heuristic: under a stylized dynamical model, passing it implies stronger state dependence; under standard regularity assumptions, this can be favorable for learnability on a suitable local region—providing a principled basis for post-drift sample sizing and for deciding when enough data has accumulated to retrain safely. 
Empirically, across four datasets, three learner families, and two drift detectors, \method matches or surpasses the best fixed-size retraining without per-dataset tuning, reduces post-drift error, and consistently outperforms incremental updates at negligible computational and memory cost. In practice, \method cleanly separates the when from the how of adaptation, enabling plug-and-play deployment in streaming systems with heterogeneous models and scarce labels, and making retraining decisions transparent, auditable, and robust to detector choice.

\subsubsection*{Acknowledgments}
This work was supported by
JSPS KAKENHI Grant-in-Aid for Scientific Research Number JP24KJ1618,
JST CREST JPMJCR23M3,
JST START JPMJST2553,
JST CREST JPMJCR20C6,
JST K Program JPMJKP25Y6,
JST COI-NEXT JPMJPF2009,
JST COI-NEXT JPMJPF2115,
the Future Social Value Co-Creation Project - Osaka University.

\subsubsection*{Ethics Statement}
We adhere to the ICLR Code of Ethics. Some experiments use a closed automobile dataset that may contain individual driving records provided under a data-use agreement. No raw personally identifiable information is shared in this paper or the supplementary materials; access is restricted to authorized researchers, and analysis is conducted on de-identified records following the provider’s privacy policies. The dataset cannot be redistributed. We disclose no conflicts of interest.

\bibliographystyle{plainnat}
\bibliography{
BIB/DySAW
}
\clearpage
\appendix
\section*{Technical Appendices and Supplementary Material}
\section{Algorithm Overview}
\label{apsec:algorithm}
In this section, we provide the details of the \method optimization algorithm proposed in section \ref{sec:02_proposed}. Algorithm \ref{alg:dysaw} provides an overview of \method. 
\begin{algorithm}[h]
    \footnotesize
    \caption{\method($\data_t,E_{t-1}$)}
    \label{alg:dysaw}
\begin{algorithmic}[1]
    \REQUIRE
        Post-drift window: $\data_t=[\datavec(t_s),...,\datavec(t)]$ and 
        previous prediction error at each $\param$: $E_{t-1}$ 
    \ENSURE
        \shiftflag and prediction error at each $\param$: $E_t$
    \STATE
    \STATE /* (i) Window Normalization and Split. */
    \STATE $\mathbf{Z} \gets \text{normalized } \data_t$
    \STATE $\data_h = \mathbf{Z}[1:\datalen-2];$ $\fdata_h = \mathbf{Z}[2:\datalen-1];$ $\datavec_q = \mathbf{Z}[\datalen-1];$ $\fdatavec_q = \mathbf{Z}[\datalen];$
    \STATE
    \STATE /* (ii) ESS Check. */
    \FORALL {$\datavec_i \in \data_h$}
    \STATE $r_{i}^{raw} = \|\datavec_i-\datavec_q\|$
    \ENDFOR
    \STATE $\mathbf{r} = \mathbf{r^{raw}}/\text{mean}(\mathbf{r^{raw}})$
    \FORALL {$\datavec_i \in \data_h$}
    \STATE $w_i = \exp(-\theta_{max}*r_i)$
    \ENDFOR
    \IF {$(\sum w_i)^2/\sum w_i^2 < C*(d+1)$}
    \RETURN \FALSE, $E_{t-1}$
    \ENDIF
    \STATE
    \STATE /* (iii) Weighted Local Regression */
    \FORALL {$\param \in [0,..,\param_{max}]$}
    \STATE $\mathbf{w} = \exp(-\theta * \mathbf{r})$
    \STATE Learn weighted local regression $\text{WLR}_\theta$ using $\{\data_h, \fdata_h\}$ and $\mathbf{w}$.
    \STATE $e_{(t,\theta)}=\|\fdatavec_q-\text{WLR}_\theta(\datavec_q)\|$
    \STATE $E_{(t,\theta)} = E_{(t-1,\theta)} + e_{(t,\theta)}$
    \ENDFOR
    \STATE
    \STATE /* (iv) Test \& Trigger. */
    \STATE $\mathbf{E}_t=\{E_{(t,\theta)}\}^{\theta^{max}}_{\theta=0}$
    \IF {$\mathbf{E}_t$ is monotonically non-increasing for $\param$}
        \RETURN \TRUE, $E_t$
    \ELSE
        \RETURN \FALSE, $E_t$
    \ENDIF
\end{algorithmic}
\end{algorithm}
\normalsize

\section{Additional Related Works}
\label{apsec:related}
Instead of retraining the entire model, there are ways to develop models that adaptively learn from changing data.
This approach is more efficient than retraining if the drift occurs only in localized regions. This approach can be easily implemented by stochastic gradient descent, and various methods have been proposed in recent years.
In addition to standard fine-tuning techniques, recent studies \citep{FSNet, OneNet, PROCEED} have proposed more sophisticated model adaptation techniques. These focus on ways to effectively adapt to recent data by using predictive feedback (e.g., errors and gradients) on new training samples. Among them, FSNet \citep{FSNet} monitors the gradients in previous fine-tunings and translates them into parameter adjustments to adapt the new predictive model to the current training sample, and dynamically adjusts the ensemble weights and the model's parameter weights according to the prediction error. 
OneNet \citep{OneNet} is an online ensemble network that generates ensemble weights to combine prediction models and dynamically adjusts ensemble weights and model parameter weights according to prediction error. \textsc{Proceed} \citep{PROCEED} is a framework that estimates the drift between recent training samples and the current test sample. It then proactively adjusts the model parameters before receiving ground-truth future values. This approach effectively bridges the temporal gap caused by forecast horizon delays.

\section{Limitation and Future work}
Our theoretical analysis adopts a stylized but standard one-step state-space model,
\[
s(t+1)=f(s(t))+\xi_t,
\]
with locally Lipschitz $f$ and sub-Gaussian noise, to formalize state dependence  and to provide an interpretation relating CALIPER’s trigger
(a monotone locality curve plus an ESS condition) 
to learnability on a suitable local region. 
These assumptions should be interpreted as regularity conditions for the analysis, not as hard requirements for applying the algorithm. In practice, CALIPER operates on arbitrary feature vectors, and the state can include a short history of the stream (e.g., via delay embedding) or a representation learned by an upstream encoder. Nevertheless, our current guarantees do not explicitly cover fully non-Markovian dynamics or settings with strong latent or exogenous drivers beyond what is captured by the chosen representation. Extending the formal results to explicit latent-variable and history-dependent models is an important direction for future work.

A second limitation concerns high-dimensional geometry. CALIPER relies on distance-based local neighborhoods to form localized fits, and naive nearest-neighbor weighting can be unreliable in very high dimensions: neighborhoods become sparse, effective sample sizes can collapse, and the localized regression can degrade unless the post-drift window is sufficiently large. CALIPER mitigates this via (i) feature/distance normalization within the current post-drift window and (ii) the ESS gate as a conservative safeguard (checked at the tightest locality), which naturally forces larger windows as dimensionality increases. However, this does not eliminate the fundamental dependence on having a meaningful metric in the working representation space. For extremely high-dimensional raw inputs (e.g., image streams), compact representations or adaptive metrics are likely necessary; characterizing when such representations yield reliable locality and ESS behavior remains future work.

Finally, CALIPER should be interpreted as detecting a regime in which a broad class of reasonably expressive learners can retrain stably, rather than identifying a single universal optimum window. Establishing a tighter theoretical link between the trigger and model-specific convergence is an interesting direction for future work.

\section{The Use of Large Language Models (LLMs)}
We use LLMs to aid or polish writing. Specifically, we used LLMs for assistance when writing papers, for example to check spelling and to make grammar suggestions.

\section{Proof of Proposition~\ref{prop:caliper-stronger-state-dep}}
\label{app:proof-full1}
\begin{proof}[Proof of Proposition~\ref{prop:caliper-stronger-state-dep}]
We work under \eqref{eq:dynamics} and \eqref{eq:alpha} with a fixed compact $B\subset\mathbb{R}^d$, scale $r>0$, and $c\ge L$.
Let $\Theta=\{\theta_k\}_{k=1}^K$ be CALIPER's locality grid, ordered so that $0<\theta_1<\cdots<\theta_K=\theta_{\max}$.
Fix $\tau\in(0,1)$ and define the induced effective-radius grid by
\[
r_k \;:=\; r^{\mathrm{eff}}(\theta_k;\tau)\;=\;\frac{\log(1/\tau)}{\theta_k},
\qquad k=1,\ldots,K,
\]
so that increasing $\theta_k$ corresponds to tighter locality (smaller $r_k$).
We write $\hat{E}_k(\state)$ and $E_k(\state)$ for the empirical and population localized one-step errors at locality $\theta_k$
(equivalently, at radius $r_k$) on window $\state$.

\medskip
\noindent Step 1 (Uniform concentration).
By sub-Gaussian coordinates, $\beta$-mixing with summable coefficients, and the ESS gate at the tightest locality $\theta_{\max}$,
there exists, for any $\delta\in(0,1)$ and with probability at least $1-\delta$, a uniform deviation bound
\[
\sup_{1\le k\le K}\big|\hat{E}_k(\state)-E_k(\state)\big|\ \le\ \varepsilon_{W}(\state),
\qquad
\varepsilon_{W}(\state)=O\!\Big(\sqrt{\tfrac{\log(K/\delta)}{\mathrm{ESS}(\theta_{\max},\state)}}\Big),
\quad \state\in\{\data,\prdata\}.
\]

\medskip
\noindent Step 2 (Triggered window $\data$: lower bound on $\alpha$).
On $\data$, the empirical monotone test passes across the grid with a positive margin; hence there exists $\tau_{\data}>0$ such that
\[
\hat{E}_{k+1}(\data)\ \le\ \hat{E}_k(\data)-\tau_{\data}\qquad (k=1,\dots,K-1).
\]
By Step~1,
\[
E_{k+1}(\data)\ \le\ E_k(\data)-\big(\tau_{\data}-2\varepsilon_{W}(\data)\big)
\qquad (k=1,\dots,K-1),
\]
with probability at least $1-\delta$, and the gate at $\theta_{\max}$ ensures $\tau_{\data}-2\varepsilon_{W}(\data)>0$.
If $\inf_{\datavec\in B}\alpha(\datavec,r;c)$ were too small, then a nonnegligible fraction of radius-$r$ neighbor pairs would satisfy $\|\prdatavec_+-\datavec_+\|>c r$, which—after absorbing the deterministic Lipschitz part through $c\ge L$—would prevent $E_{k+1}(\data)<E_k(\data)$ from holding uniformly as the radius shrinks, a contradiction.
Therefore there exists $\underline{\alpha}\in(0,1)$ such that
\[
\inf_{\datavec\in B}\alpha(\datavec,r;c)\ \ge\ \underline{\alpha}
\quad\text{on }\data
\]
with probability at least $1-\delta$.

\medskip
\noindent Step 3 (Non-triggered window $\prdata$: upper bound on $\alpha$ and gap).
On $\prdata$, the empirical monotone test fails with a positive margin, so there exist $k^\star$ and $\tau_{\prdata}>0$ such that
\[
\hat{E}_{k^\star+1}(\prdata)\ \ge\ \hat{E}_{k^\star}(\prdata)+\tau_{\prdata}.
\]
By Step~1,
\[
E_{k^\star+1}(\prdata)\ \ge\ E_{k^\star}(\prdata)+\big(\tau_{\prdata}-2\varepsilon_{W}(\prdata)\big),
\]
with probability at least $1-\delta$ (and we ensure $\tau_{\prdata}-2\varepsilon_{W}(\prdata)>0$ via the gate at $\theta_{\max}$).
If $\sup_{\prdatavec\in B}\alpha(\prdatavec,r;c)$ were close to one, shrinking the radius would not increase the population localized error, contradicting the display.
Hence there exists $\overline{\alpha}\in(0,1)$ such that
\[
\sup_{\prdatavec\in B}\alpha(\prdatavec,r;c)\ \le\ \overline{\alpha}
\quad\text{on }\prdata
\]
with probability at least $1-\delta$.
Setting $\Delta:=\underline{\alpha}-\overline{\alpha}>0$ yields \eqref{eq:dominance-gap} and completes the proof.
\end{proof}

\begin{figure}[t]
    \centering
    \includegraphics[width=\linewidth]{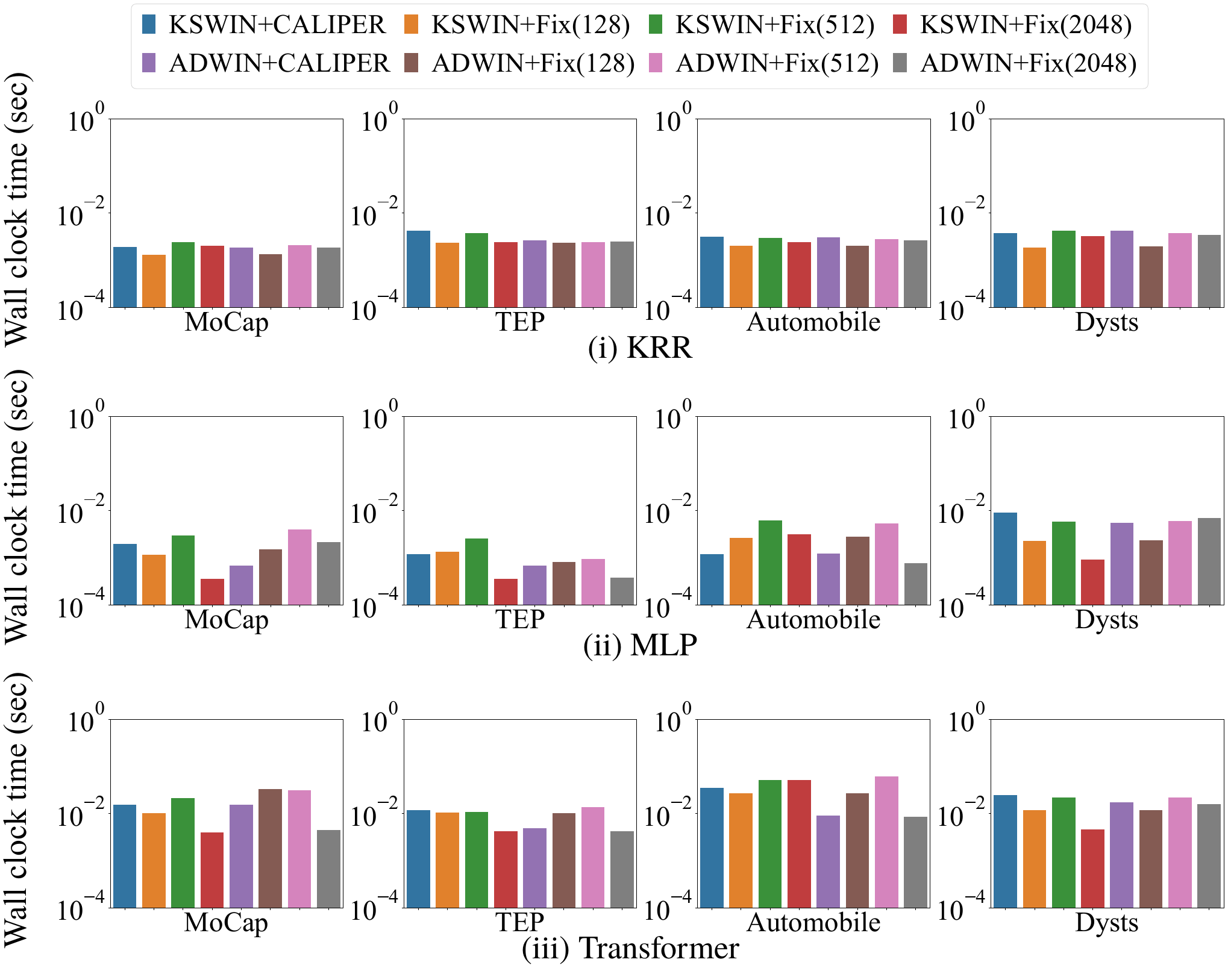}
    \caption{
        The average per time step wall-clock time (bar chart) for MoCap, TEP, Automobile, and Dysts under ADWIN/KSWIN with \method vs. fixed data size (128/512/2048) using KRR/MLP/Transformer; \method matches fixed data size baselines.
    }
    \label{apfig:scale}
\end{figure}

\section{Experimental Settings}
\label{apseq: exp-setting}
\mypara{Code} Our datasets and source code are available at: \url{https://github.com/renfujiwara/CALIPER}

\mypara{Hyperparameters of \method} In \method, $\theta$ is selected from $[0, 0.1, 1.0, 2.0, 4.0, 8.0, 16.0]$ and we set $C=3$. Then, it is determined whether the error non-increases monotonically with respect to $\theta$.

\mypara{Computing infrastructure}
The configuration includes 2 * Xeon Gold 6444Y 3.6GHz CPU, 12 * 64GB DDR4 RAM (768GB), and NVIDIA RTX A6000 48GB GPU, which is sufficient for all the baselines.

\mypara{Dysts dataset details}
These data are obtained from dysts database \citep{dysts}, which provides data, equations, and dynamical properties for chaotic systems exhibiting strange attractors and coming from disparate scientific fields.  In our experiments, we consider 12 hyperchaotic systems representing ordinary differential equations (ODEs) with polynomial nonlinearities. We used 10 systems for training and tested them with the remaining systems. Each system is included in the test data at least twice. In other words, we conducted our experiments with 12 different synthetic data sets (\#1-\#12). We use 1500 lengths of data for the warm-up phase and 6000 lengths of data that we generated for the online learning period. The time step between each sample is 0.05, and the initial conditions are randomly generated according to \citep{dysts}.

\mypara{Implementation Details}
In the MoCap, Automobile, and Dysts datasets, we split the data into warm-up and online-training phases with a 20:80 ratio. 
We used the TEP dataset, where the normal operating interval was employed as the warm-up phase, and the abnormal interval was treated as the online-training phase.
Following \citep{Informer}, we minimize the mean squared error (MSE) using the AdamW optimizer \citep{AdamW}; the learning rates for both the MLP and Transformer were selected from $\{10^{-3}, 5\times10^{-3}, 10^{-4}, 5\times10^{-4}, 10^{-5}\}$ based on validation performance within the warm-up phase. The MLP uses a hidden dimension of 32, and the Transformer hyperparameters and other training settings follow the benchmark implementation.\footnote{\url{https://github.com/thuml/Time-Series-Library}} For KRR, we implemented scikit-learn’s \texttt{KernelRidge} \citep{scikit-learn} with an RBF kernel and search over $\gamma \in \{\text{None}, 10^{-3}, 10^{-2}, 10^{-1}, 1.0\}$ and $\alpha \in \{10^{-2}, 10^{-1}, 1.0, 10.0\}$, selecting the combination based on the validation performance within the warm-up phase. For drift detection, we used ADWIN with a confidence value of $0.002$ and KSWIN with a significance level of $0.05$.

\begin{figure}[t]
    \centering
    \includegraphics[width=\linewidth]{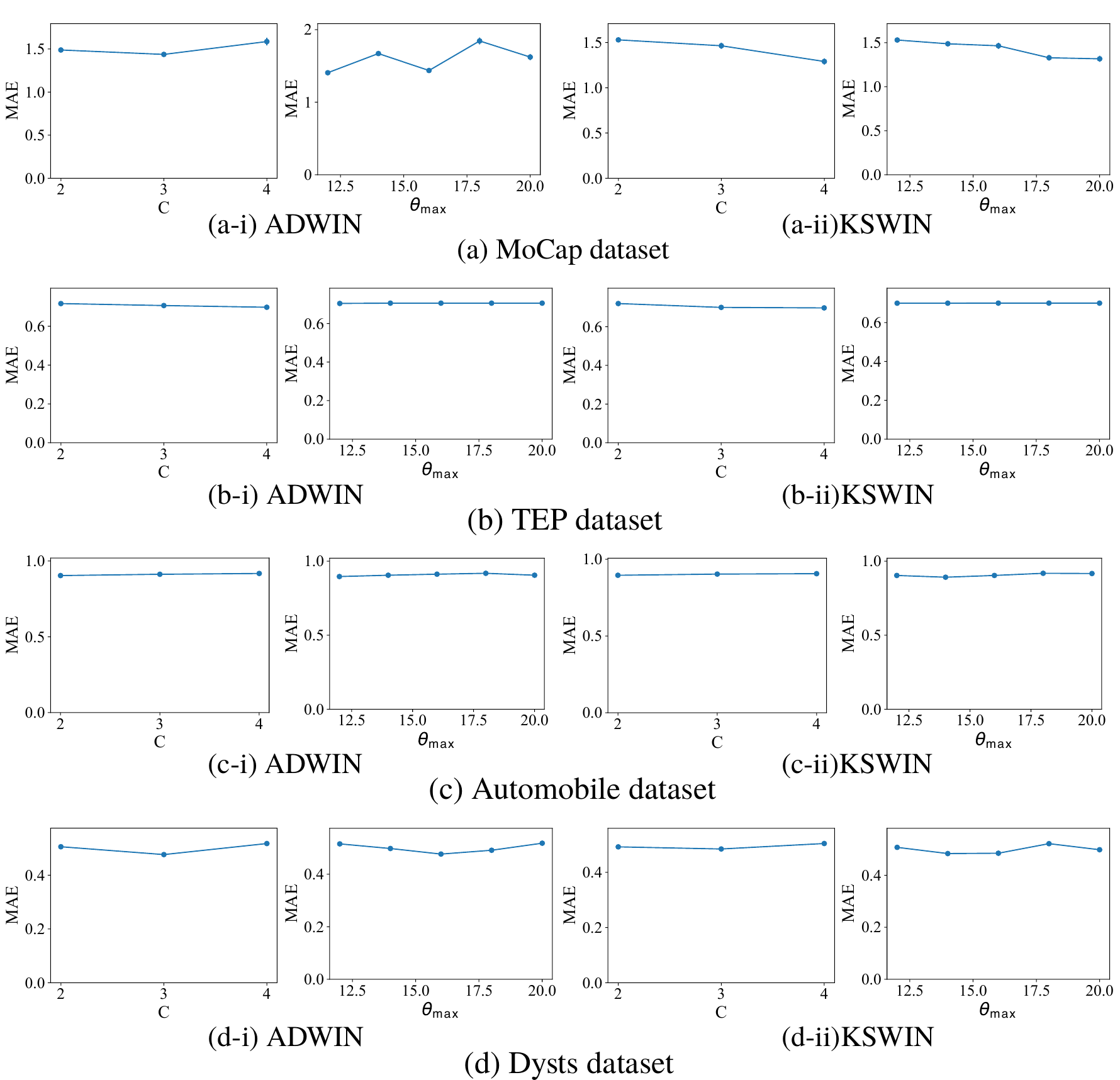}
    \caption{
        Hyperparameter sensitivity of CALIPER.
        Test MAE as a function of the ESS multiplier $C \in \{2,3,4\}$ (left
        panels) and the locality parameter
        $\theta_{\max} \in \{12,14,16,18,20\}$ (right panels) on the four
        datasets and for both ADWIN and KSWIN.
        The y-axis is on the same absolute MAE scale as in the main results.
        The curves are nearly flat on three of the datasets, and only large
        values of $\theta_{\max}$ on the MoCap dataset cause a noticeable
        degradation, consistent with the role of $\theta_{\max}$ as a locality parameter.
  }
  \label{apfig:hparam_sensitivity}
\end{figure}

\section{Additional Experimental Results}
\label{apseq: additional-result}
\subsection{Additional analyses}

\mypara{Hyperparameter sensitivity of CALIPER}
Figure~\ref{apfig:hparam_sensitivity} shows the sensitivity of CALIPER to
its two main hyperparameters: the locality parameter $\theta_{\max}$
and the ESS multiplier $C$.  For each dataset and each drift detector
(ADWIN, KSWIN), we vary $C \in \{2,3,4\}$ and
$\theta_{\max} \in \{12,14,16,18,20\}$ while keeping all other settings
fixed, and plot the resulting test MAE.
The y-axis of Fig.~\ref{apfig:hparam_sensitivity} uses the same absolute
MAE scale as our main results.  On three of the four datasets, the
curves are almost flat for both hyperparameters, indicating that
CALIPER is highly robust in these regimes.  On the more challenging
MoCap dataset, changing $C$ still has a limited effect, whereas large
values of $\theta_{\max}$ (around $18$--$20$) lead to performance
degradation.  This behaviour agrees with the interpretation of
$\theta_{\max}$ as a locality scale: if the window becomes too wide,
the WLR probe no longer reflects local state dependence and the
retraining trigger deteriorates.  Around the default configuration
($\theta_{\max}=16$, $C=3$) used in all other experiments, the MAE remains close to the optimum on all datasets.

\mypara{Full Experimental Results for Prediction}
Tables~\ref{tab:fullresult-ADWIN}
and~\ref{tab:fullresult-KSWIN} show full numerical results (averaged over horizons (1,15,30)).
As discussed in Section~\ref{sec:04_exp}, our method achieves performance that is comparable to or even substantially better than fixed-length retraining strategies, despite not having access to the final predictive accuracy in advance. A noteworthy observation is that the optimal fixed data size window varies across datasets. For instance, smaller data sizes are more advantageous for the MoCap and TEP datasets, whereas larger data sizes tend to perform better for the others. This highlights the difficulty of relying on a fixed data size retraining strategy and underscores the effectiveness of our approach.

\mypara{Tree-based learner (ExtraTrees)}
We also include an additional experiment using an Extremely Randomized Trees (ExtraTrees)
regressor as a tree-based base learner. We keep the same detectors (ADWIN and KSWIN),
datasets, and CALIPER hyperparameters as in the main experiments. For each dataset and
detector, we sweep several fixed post-drift window sizes and compare them to CALIPER.
Across all settings, CALIPER’s estimated window sizes remain close to the empirically
optimal fixed choice, and its post-drift errors match or improve upon the best fixed
window. These results confirm that CALIPER behaves similarly for tree-based models as
for KRR, MLP, and Transformers. Representative curves and full numerical values
(averaged over horizons 1, 15, and 30) are reported in Tables~\ref{tab:fullresult-ADWIN}
and~\ref{tab:fullresult-KSWIN}.

\mypara{Additional Scalability Results}
Figure~\ref{apfig:scale} shows dataset-level average wall clock time per time step (bar chart) across four datasets (MoCap, TEP, Automobile, Dysts), two detectors (ADWIN/KSWIN), and two strategies (\method vs. fixed data size of 128/512/2048) under three base learners (KRR, MLP, Transformer). Across all settings, \method is on par with the fixed data size baselines, indicating negligible additional overhead; remaining differences are largely explained by the base learner and detector rather than the strategy.

\clearpage
\begin{table}[t]
\centering
\caption{Full results with ADWIN. For each dataset and model family, we report MSE/MAE at prediction horizons (1, 15, 30) and their average. We compare CALIPER (ours) with fixed retraining windows of 128, 512, and 2048 steps (“Fix (·)”) and with an online Incremental baseline (no explicit window retraining). The past sequence length is 30. Datasets span four heterogeneous domains (TEP, MoCap, Automobile, Dysts). Lower is better; best per column in bold (second best underlined, if applicable).}
\label{tab:fullresult-ADWIN}
\footnotesize
\resizebox{0.9\linewidth}{!}{\begin{tabular}{llccccccccccc}
\toprule
\multirow{2}{*}{\textbf{Dataset}} & \multirow{2}{*}{\textbf{Model}} & \multirow{2}{*}{\textbf{$l_s$}} 
& \multicolumn{2}{c}{CALIPER} & \multicolumn{2}{c}{Fix (128)} & \multicolumn{2}{c}{Fix (512)} & \multicolumn{2}{c}{Fix (2048)} & \multicolumn{2}{c}{Incremental}\\
\cmidrule(lr){4-5} \cmidrule(lr){6-7} \cmidrule(lr){8-9} \cmidrule(lr){10-11} \cmidrule(lr){12-13}
 & & & MSE & MAE & MSE & MAE & MSE & MAE & MSE & MAE & MSE & MAE \\\midrule
\multirow{12}*{\rotatebox[origin=c]{90}{MoCap}}
 & KRR & 1 &\textcolor{red}{\textbf{6.952}}&\textcolor{red}{\textbf{1.146}}&\textcolor{blue}{\underline{7.607}}&1.339&12.369&1.719&8.074&\textcolor{blue}{\underline{1.228}}&16.003&2.201\\
 &   & 15 &\textcolor{red}{\textbf{7.691}}&\textcolor{red}{\textbf{1.480}}&\textcolor{blue}{\underline{8.934}}&\textcolor{blue}{\underline{1.561}}&15.165&2.230&10.010&1.802&19.552&2.538\\
 &   & 30 &\textcolor{red}{\textbf{9.431}}&\textcolor{blue}{\underline{1.716}}&\textcolor{blue}{\underline{9.879}}&\textcolor{red}{\textbf{1.650}}&17.725&2.395&19.373&2.565&18.946&2.524\\
 &   & Avg &\textcolor{red}{\textbf{8.025}}&\textcolor{red}{\textbf{1.447}}&\textcolor{blue}{\underline{8.807}}&\textcolor{blue}{\underline{1.516}}&15.086&2.114&12.486&1.865&18.167&2.421\\
\cmidrule(lr){2-13} 
 & ExtraTrees & 1 &\textcolor{red}{\textbf{7.822}}&\textcolor{red}{\textbf{1.455}}&15.295&2.025&11.100&1.804&\textcolor{blue}{\underline{9.541}}&\textcolor{blue}{\underline{1.626}}&35.622&3.337\\
 &   & 15 &\textcolor{red}{\textbf{9.846}}&\textcolor{red}{\textbf{1.701}}&17.791&2.198&13.351&2.030&\textcolor{blue}{\underline{10.599}}&\textcolor{blue}{\underline{1.806}}&44.145&3.777\\
 &   & 30 &\textcolor{red}{\textbf{11.127}}&\textcolor{red}{\textbf{1.825}}&15.504&2.082&13.479&2.028&\textcolor{blue}{\underline{11.610}}&\textcolor{blue}{\underline{1.910}}&33.087&3.298\\
 &   & Avg &\textcolor{red}{\textbf{9.598}}&\textcolor{red}{\textbf{1.660}}&16.196&2.102&12.643&1.954&\textcolor{blue}{\underline{10.584}}&\textcolor{blue}{\underline{1.781}}&37.618&3.471\\
\cmidrule(lr){2-13} 
 & MLP & 1 &\textcolor{blue}{\underline{3.435}}&\textcolor{blue}{\underline{0.966}}&6.169&1.361&5.548&1.409&\textcolor{red}{\textbf{2.199}}&\textcolor{red}{\textbf{0.919}}&357.793&7.636\\
 &   & 15 &\textcolor{red}{\textbf{7.534}}&\textcolor{red}{\textbf{1.469}}&\textcolor{blue}{\underline{9.392}}&\textcolor{blue}{\underline{1.705}}&24.581&2.570&9.661&1.928&578.383&9.466\\
 &   & 30 &\textcolor{red}{\textbf{10.349}}&\textcolor{red}{\textbf{1.725}}&\textcolor{blue}{\underline{14.122}}&\textcolor{blue}{\underline{1.959}}&41.531&3.368&21.282&2.845&301.643&8.285\\
 &   & Avg &\textcolor{red}{\textbf{7.106}}&\textcolor{red}{\textbf{1.387}}&\textcolor{blue}{\underline{9.894}}&\textcolor{blue}{\underline{1.675}}&23.887&2.449&11.047&1.897&412.606&8.462\\
\cmidrule(lr){2-13} 
 & Transformer & 1 &\textcolor{red}{\textbf{9.458}}&\textcolor{red}{\textbf{1.571}}&14.566&2.013&17.245&2.324&\textcolor{blue}{\underline{10.666}}&\textcolor{blue}{\underline{1.779}}&19.661&2.508\\
 &   & 15 &\textcolor{red}{\textbf{10.134}}&\textcolor{red}{\textbf{1.681}}&17.627&2.185&20.548&2.581&\textcolor{blue}{\underline{12.353}}&\textcolor{blue}{\underline{1.988}}&22.874&2.708\\
 &   & 30 &\textcolor{red}{\textbf{13.273}}&\textcolor{red}{\textbf{1.981}}&19.514&2.361&24.971&2.922&\textcolor{blue}{\underline{13.308}}&\textcolor{blue}{\underline{2.093}}&23.700&2.748\\
 &   & Avg &\textcolor{red}{\textbf{10.955}}&\textcolor{red}{\textbf{1.744}}&17.236&2.186&20.921&2.609&\textcolor{blue}{\underline{12.109}}&\textcolor{blue}{\underline{1.953}}&22.078&2.654\\
\midrule
\multirow{12}*{\rotatebox[origin=c]{90}{TEP}}
 & KRR & 1 &1.943&0.690&\textcolor{blue}{\underline{1.696}}&\textcolor{red}{\textbf{0.669}}&1.952&\textcolor{blue}{\underline{0.669}}&2.638&0.749&\textcolor{red}{\textbf{1.671}}&0.691\\
 &   & 15 &2.055&0.731&\textcolor{blue}{\underline{1.861}}&\textcolor{red}{\textbf{0.707}}&2.134&0.716&2.752&0.789&\textcolor{red}{\textbf{1.794}}&\textcolor{blue}{\underline{0.708}}\\
 &   & 30 &2.166&0.760&\textcolor{blue}{\underline{1.977}}&\textcolor{blue}{\underline{0.732}}&2.275&0.746&2.863&0.819&\textcolor{red}{\textbf{1.874}}&\textcolor{red}{\textbf{0.718}}\\
 &   & Avg &2.055&0.727&\textcolor{blue}{\underline{1.845}}&\textcolor{red}{\textbf{0.702}}&2.120&0.710&2.751&0.786&\textcolor{red}{\textbf{1.780}}&\textcolor{blue}{\underline{0.706}}\\
\cmidrule(lr){2-13} 
 & ExtraTrees & 1 &1.674&\textcolor{blue}{\underline{0.637}}&\textcolor{blue}{\underline{1.383}}&0.638&1.820&0.667&2.340&0.734&\textcolor{red}{\textbf{1.314}}&\textcolor{red}{\textbf{0.621}}\\
 &   & 15 &1.827&\textcolor{blue}{\underline{0.685}}&\textcolor{blue}{\underline{1.682}}&0.697&2.174&0.731&2.749&0.800&\textcolor{red}{\textbf{1.572}}&\textcolor{red}{\textbf{0.663}}\\
 &   & 30 &1.951&\textcolor{blue}{\underline{0.721}}&\textcolor{blue}{\underline{1.883}}&0.728&2.411&0.770&3.007&0.840&\textcolor{red}{\textbf{1.716}}&\textcolor{red}{\textbf{0.678}}\\
 &   & Avg &1.817&\textcolor{blue}{\underline{0.681}}&\textcolor{blue}{\underline{1.649}}&0.688&2.135&0.722&2.699&0.792&\textcolor{red}{\textbf{1.534}}&\textcolor{red}{\textbf{0.654}}\\
\cmidrule(lr){2-13} 
 & MLP & 1 &\textcolor{blue}{\underline{1.651}}&\textcolor{blue}{\underline{0.663}}&1.905&0.746&\textcolor{red}{\textbf{1.478}}&\textcolor{red}{\textbf{0.642}}&1.910&0.706&3.136&0.829\\
 &   & 15 &\textcolor{blue}{\underline{1.739}}&\textcolor{blue}{\underline{0.702}}&2.051&0.765&\textcolor{red}{\textbf{1.684}}&\textcolor{red}{\textbf{0.698}}&2.044&0.760&4.180&0.914\\
 &   & 30 &\textcolor{red}{\textbf{1.979}}&\textcolor{red}{\textbf{0.746}}&\textcolor{blue}{\underline{2.161}}&0.792&2.247&\textcolor{blue}{\underline{0.769}}&2.837&0.849&3.781&0.894\\
 &   & Avg &\textcolor{red}{\textbf{1.790}}&\textcolor{blue}{\underline{0.704}}&2.039&0.768&\textcolor{blue}{\underline{1.803}}&\textcolor{red}{\textbf{0.703}}&2.264&0.772&3.699&0.879\\
\cmidrule(lr){2-13} 
 & Transformer & 1 &\textcolor{red}{\textbf{1.722}}&\textcolor{red}{\textbf{0.679}}&\textcolor{blue}{\underline{1.723}}&0.709&1.793&\textcolor{blue}{\underline{0.699}}&2.229&0.759&2.128&0.705\\
 &   & 15 &\textcolor{red}{\textbf{1.763}}&\textcolor{red}{\textbf{0.680}}&\textcolor{blue}{\underline{2.046}}&0.727&2.061&\textcolor{blue}{\underline{0.719}}&2.301&0.763&2.421&0.781\\
 &   & 30 &\textcolor{red}{\textbf{1.968}}&\textcolor{red}{\textbf{0.735}}&\textcolor{blue}{\underline{2.078}}&\textcolor{blue}{\underline{0.753}}&2.319&0.772&2.660&0.826&2.579&0.814\\
 &   & Avg &\textcolor{red}{\textbf{1.818}}&\textcolor{red}{\textbf{0.698}}&\textcolor{blue}{\underline{1.949}}&\textcolor{blue}{\underline{0.730}}&2.058&0.730&2.397&0.783&2.376&0.767\\
\midrule
\multirow{12}*{\rotatebox[origin=c]{90}{Automobile}}
 & KRR & 1 &\textcolor{blue}{\underline{1.750}}&\textcolor{blue}{\underline{0.763}}&1.895&0.814&\textcolor{red}{\textbf{1.667}}&\textcolor{red}{\textbf{0.726}}&1.777&0.765&1.853&0.798\\
 &   & 15 &1.999&0.857&2.014&0.859&\textcolor{red}{\textbf{1.939}}&\textcolor{red}{\textbf{0.825}}&2.002&0.855&\textcolor{blue}{\underline{1.978}}&\textcolor{blue}{\underline{0.832}}\\
 &   & 30 &2.098&0.888&\textcolor{blue}{\underline{2.076}}&0.870&2.100&\textcolor{blue}{\underline{0.867}}&2.101&0.888&\textcolor{red}{\textbf{1.992}}&\textcolor{red}{\textbf{0.836}}\\
 &   & Avg &1.949&0.836&1.995&0.848&\textcolor{red}{\textbf{1.902}}&\textcolor{red}{\textbf{0.806}}&1.960&0.836&\textcolor{blue}{\underline{1.941}}&\textcolor{blue}{\underline{0.822}}\\
\cmidrule(lr){2-13} 
 & ExtraTrees & 1 &\textcolor{blue}{\underline{1.692}}&\textcolor{blue}{\underline{0.749}}&2.141&0.811&\textcolor{red}{\textbf{1.616}}&\textcolor{red}{\textbf{0.716}}&1.770&0.760&1.944&0.775\\
 &   & 15 &\textcolor{blue}{\underline{1.990}}&\textcolor{blue}{\underline{0.840}}&2.239&0.858&\textcolor{red}{\textbf{1.956}}&\textcolor{red}{\textbf{0.810}}&2.036&0.844&2.274&0.843\\
 &   & 30 &\textcolor{blue}{\underline{2.173}}&0.878&2.315&0.882&\textcolor{red}{\textbf{2.169}}&\textcolor{blue}{\underline{0.858}}&2.181&0.880&2.352&\textcolor{red}{\textbf{0.849}}\\
 &   & Avg &\textcolor{blue}{\underline{1.952}}&0.823&2.232&0.850&\textcolor{red}{\textbf{1.914}}&\textcolor{red}{\textbf{0.795}}&1.996&0.828&2.190&\textcolor{blue}{\underline{0.822}}\\
\cmidrule(lr){2-13} 
 & MLP & 1 &\textcolor{blue}{\underline{1.866}}&\textcolor{blue}{\underline{0.835}}&2.074&0.881&\textcolor{red}{\textbf{1.717}}&\textcolor{red}{\textbf{0.782}}&1.934&0.846&2.282&0.850\\
 &   & 15 &\textcolor{blue}{\underline{2.154}}&\textcolor{blue}{\underline{0.910}}&2.170&0.918&2.314&0.916&\textcolor{red}{\textbf{2.119}}&\textcolor{red}{\textbf{0.901}}&2.526&0.918\\
 &   & 30 &\textcolor{blue}{\underline{2.252}}&0.933&2.275&0.928&2.578&0.961&\textcolor{red}{\textbf{2.192}}&\textcolor{blue}{\underline{0.922}}&2.409&\textcolor{red}{\textbf{0.913}}\\
 &   & Avg &\textcolor{blue}{\underline{2.090}}&0.892&2.173&0.909&2.203&\textcolor{red}{\textbf{0.886}}&\textcolor{red}{\textbf{2.082}}&\textcolor{blue}{\underline{0.889}}&2.406&0.894\\
\cmidrule(lr){2-13} 
 & Transformer & 1 &2.027&0.873&2.198&0.895&\textcolor{blue}{\underline{2.006}}&\textcolor{blue}{\underline{0.847}}&2.075&0.880&\textcolor{red}{\textbf{1.558}}&\textcolor{red}{\textbf{0.688}}\\
 &   & 15 &\textcolor{blue}{\underline{1.977}}&0.862&2.211&0.899&2.091&0.863&1.984&\textcolor{blue}{\underline{0.860}}&\textcolor{red}{\textbf{1.862}}&\textcolor{red}{\textbf{0.778}}\\
 &   & 30 &2.205&0.913&2.325&0.920&2.257&\textcolor{blue}{\underline{0.901}}&\textcolor{red}{\textbf{2.153}}&0.904&\textcolor{blue}{\underline{2.179}}&\textcolor{red}{\textbf{0.854}}\\
 &   & Avg &\textcolor{blue}{\underline{2.069}}&0.883&2.245&0.905&2.118&\textcolor{blue}{\underline{0.870}}&2.071&0.881&\textcolor{red}{\textbf{1.867}}&\textcolor{red}{\textbf{0.773}}\\
\midrule
\multirow{12}*{\rotatebox[origin=c]{90}{Dysts}}
 & KRR & 1 &\textcolor{blue}{\underline{0.187}}&\textcolor{blue}{\underline{0.300}}&0.518&0.556&0.210&0.332&\textcolor{red}{\textbf{0.116}}&\textcolor{red}{\textbf{0.250}}&0.530&0.575\\
 &   & 15 &0.474&0.528&0.626&0.623&\textcolor{red}{\textbf{0.418}}&\textcolor{red}{\textbf{0.490}}&\textcolor{blue}{\underline{0.452}}&\textcolor{blue}{\underline{0.524}}&0.599&0.611\\
 &   & 30 &0.615&\textcolor{blue}{\underline{0.614}}&0.662&0.645&\textcolor{red}{\textbf{0.514}}&\textcolor{red}{\textbf{0.553}}&0.613&0.623&\textcolor{blue}{\underline{0.608}}&0.617\\
 &   & Avg &0.425&0.481&0.602&0.608&\textcolor{red}{\textbf{0.381}}&\textcolor{red}{\textbf{0.458}}&\textcolor{blue}{\underline{0.394}}&\textcolor{blue}{\underline{0.465}}&0.579&0.601\\
\cmidrule(lr){2-13} 
 & ExtraTrees & 1 &\textcolor{blue}{\underline{0.226}}&0.325&0.549&0.541&0.231&\textcolor{red}{\textbf{0.321}}&\textcolor{red}{\textbf{0.200}}&\textcolor{blue}{\underline{0.323}}&0.536&0.549\\
 &   & 15 &\textcolor{blue}{\underline{0.421}}&\textcolor{blue}{\underline{0.477}}&0.684&0.622&\textcolor{red}{\textbf{0.372}}&\textcolor{red}{\textbf{0.435}}&0.432&0.500&0.653&0.613\\
 &   & 30 &\textcolor{blue}{\underline{0.566}}&\textcolor{blue}{\underline{0.573}}&0.754&0.662&\textcolor{red}{\textbf{0.495}}&\textcolor{red}{\textbf{0.519}}&0.584&0.602&0.684&0.631\\
 &   & Avg &\textcolor{blue}{\underline{0.404}}&\textcolor{blue}{\underline{0.458}}&0.662&0.608&\textcolor{red}{\textbf{0.366}}&\textcolor{red}{\textbf{0.425}}&0.406&0.475&0.624&0.597\\
\cmidrule(lr){2-13} 
 & MLP & 1 &\textcolor{blue}{\underline{0.191}}&\textcolor{blue}{\underline{0.293}}&0.499&0.543&0.195&0.299&\textcolor{red}{\textbf{0.143}}&\textcolor{red}{\textbf{0.261}}&81.408&1.403\\
 &   & 15 &\textcolor{red}{\textbf{0.466}}&\textcolor{blue}{\underline{0.491}}&0.622&0.611&\textcolor{blue}{\underline{0.483}}&\textcolor{red}{\textbf{0.478}}&0.556&0.537&76.533&1.647\\
 &   & 30 &\textcolor{red}{\textbf{0.640}}&\textcolor{blue}{\underline{0.597}}&\textcolor{blue}{\underline{0.664}}&0.638&0.677&\textcolor{red}{\textbf{0.583}}&0.818&0.665&57.319&1.522\\
 &   & Avg &\textcolor{red}{\textbf{0.432}}&\textcolor{blue}{\underline{0.460}}&0.595&0.597&\textcolor{blue}{\underline{0.452}}&\textcolor{red}{\textbf{0.453}}&0.506&0.488&71.753&1.524\\
\cmidrule(lr){2-13} 
 & Transformer & 1 &0.432&0.458&0.713&0.657&\textcolor{blue}{\underline{0.391}}&\textcolor{blue}{\underline{0.437}}&\textcolor{red}{\textbf{0.370}}&\textcolor{red}{\textbf{0.435}}&0.521&0.549\\
 &   & 15 &0.550&\textcolor{blue}{\underline{0.526}}&0.724&0.659&\textcolor{red}{\textbf{0.462}}&\textcolor{red}{\textbf{0.476}}&\textcolor{blue}{\underline{0.533}}&0.536&0.586&0.586\\
 &   & 30 &0.666&\textcolor{blue}{\underline{0.607}}&0.740&0.675&\textcolor{red}{\textbf{0.569}}&\textcolor{red}{\textbf{0.554}}&0.717&0.647&\textcolor{blue}{\underline{0.640}}&0.622\\
 &   & Avg &0.549&\textcolor{blue}{\underline{0.531}}&0.726&0.664&\textcolor{red}{\textbf{0.474}}&\textcolor{red}{\textbf{0.489}}&\textcolor{blue}{\underline{0.540}}&0.539&0.582&0.585\\
\bottomrule
\end{tabular}
}
\end{table}
\begin{table}[t]
\centering
\caption{Full results with KSWIN. For each dataset and model family, we report MSE/MAE at prediction horizons (1, 15, 30) and their average. We compare CALIPER (ours) with fixed retraining windows of 128, 512, and 2048 steps (“Fix (·)”) and with an online Incremental baseline (no explicit window retraining). The past sequence length is 30. Datasets span four heterogeneous domains (TEP, MoCap, Automobile, Dysts). Lower is better; best per column in bold (second best underlined, if applicable).}
\label{tab:fullresult-KSWIN}
\footnotesize
\resizebox{0.9\linewidth}{!}{\begin{tabular}{llccccccccccc}
\toprule
\multirow{2}{*}{\textbf{Dataset}} & \multirow{2}{*}{\textbf{Model}} & \multirow{2}{*}{\textbf{$l_s$}} 
& \multicolumn{2}{c}{CALIPER} & \multicolumn{2}{c}{Fix (128)} & \multicolumn{2}{c}{Fix (512)} & \multicolumn{2}{c}{Fix (2048)} & \multicolumn{2}{c}{Incremental}\\
\cmidrule(lr){4-5} \cmidrule(lr){6-7} \cmidrule(lr){8-9} \cmidrule(lr){10-11} \cmidrule(lr){12-13}
 & & & MSE & MAE & MSE & MAE & MSE & MAE & MSE & MAE & MSE & MAE \\\midrule
\multirow{12}*{\rotatebox[origin=c]{90}{MoCap}}
 & KRR & 1 &\textcolor{red}{\textbf{5.979}}&\textcolor{red}{\textbf{1.060}}&\textcolor{blue}{\underline{7.726}}&1.312&10.824&1.592&8.052&\textcolor{blue}{\underline{1.215}}&16.003&2.201\\
 &   & 15 &\textcolor{red}{\textbf{7.690}}&\textcolor{red}{\textbf{1.428}}&9.842&\textcolor{blue}{\underline{1.593}}&12.718&1.978&\textcolor{blue}{\underline{9.483}}&1.726&19.552&2.538\\
 &   & 30 &\textcolor{red}{\textbf{9.471}}&\textcolor{red}{\textbf{1.656}}&\textcolor{blue}{\underline{10.853}}&\textcolor{blue}{\underline{1.759}}&14.109&2.154&16.875&2.434&18.946&2.524\\
 &   & Avg &\textcolor{red}{\textbf{7.713}}&\textcolor{red}{\textbf{1.381}}&\textcolor{blue}{\underline{9.474}}&\textcolor{blue}{\underline{1.554}}&12.550&1.908&11.470&1.792&18.167&2.421\\
\cmidrule(lr){2-13} 
 & ExtraTrees & 1 &\textcolor{red}{\textbf{7.348}}&\textcolor{red}{\textbf{1.374}}&13.249&1.867&10.291&1.750&\textcolor{blue}{\underline{9.456}}&\textcolor{blue}{\underline{1.599}}&35.622&3.337\\
 &   & 15 &\textcolor{red}{\textbf{9.425}}&\textcolor{red}{\textbf{1.597}}&19.238&2.274&13.309&2.018&\textcolor{blue}{\underline{10.488}}&\textcolor{blue}{\underline{1.781}}&44.145&3.777\\
 &   & 30 &\textcolor{red}{\textbf{9.425}}&\textcolor{red}{\textbf{1.670}}&17.741&2.276&14.126&2.060&\textcolor{blue}{\underline{11.418}}&\textcolor{blue}{\underline{1.869}}&33.087&3.298\\
 &   & Avg &\textcolor{red}{\textbf{8.732}}&\textcolor{red}{\textbf{1.547}}&16.743&2.139&12.575&1.942&\textcolor{blue}{\underline{10.454}}&\textcolor{blue}{\underline{1.750}}&37.618&3.471\\
\cmidrule(lr){2-13} 
 & MLP & 1 &\textcolor{blue}{\underline{3.601}}&\textcolor{blue}{\underline{0.995}}&6.394&1.329&6.463&1.431&\textcolor{red}{\textbf{2.359}}&\textcolor{red}{\textbf{0.949}}&357.793&7.636\\
 &   & 15 &\textcolor{red}{\textbf{7.292}}&\textcolor{red}{\textbf{1.466}}&\textcolor{blue}{\underline{9.534}}&\textcolor{blue}{\underline{1.725}}&32.887&2.864&9.939&1.956&578.383&9.466\\
 &   & 30 &\textcolor{blue}{\underline{11.455}}&\textcolor{red}{\textbf{1.848}}&\textcolor{red}{\textbf{11.218}}&\textcolor{blue}{\underline{1.895}}&50.112&3.699&22.134&2.921&301.643&8.285\\
 &   & Avg &\textcolor{red}{\textbf{7.449}}&\textcolor{red}{\textbf{1.436}}&\textcolor{blue}{\underline{9.049}}&\textcolor{blue}{\underline{1.650}}&29.821&2.665&11.478&1.942&412.606&8.462\\
\cmidrule(lr){2-13} 
 & Transformer & 1 &\textcolor{red}{\textbf{8.391}}&\textcolor{red}{\textbf{1.550}}&15.355&2.047&15.005&2.210&\textcolor{blue}{\underline{10.552}}&\textcolor{blue}{\underline{1.776}}&19.661&2.508\\
 &   & 15 &\textcolor{red}{\textbf{10.852}}&\textcolor{red}{\textbf{1.789}}&18.536&2.229&17.840&2.407&\textcolor{blue}{\underline{12.468}}&\textcolor{blue}{\underline{1.980}}&22.874&2.708\\
 &   & 30 &\textcolor{blue}{\underline{14.808}}&\textcolor{red}{\textbf{2.093}}&20.485&2.438&20.827&2.634&\textcolor{red}{\textbf{13.402}}&\textcolor{blue}{\underline{2.096}}&23.700&2.748\\
 &   & Avg &\textcolor{red}{\textbf{11.350}}&\textcolor{red}{\textbf{1.811}}&18.125&2.238&17.891&2.417&\textcolor{blue}{\underline{12.141}}&\textcolor{blue}{\underline{1.951}}&22.078&2.654\\
\midrule
\multirow{12}*{\rotatebox[origin=c]{90}{TEP}}
 & KRR & 1 &1.922&\textcolor{blue}{\underline{0.687}}&\textcolor{blue}{\underline{1.812}}&0.689&1.951&\textcolor{red}{\textbf{0.665}}&2.638&0.749&\textcolor{red}{\textbf{1.671}}&0.691\\
 &   & 15 &2.035&0.728&\textcolor{blue}{\underline{1.973}}&0.717&2.145&\textcolor{blue}{\underline{0.711}}&2.752&0.789&\textcolor{red}{\textbf{1.794}}&\textcolor{red}{\textbf{0.708}}\\
 &   & 30 &2.147&0.758&\textcolor{blue}{\underline{2.095}}&\textcolor{blue}{\underline{0.737}}&2.290&0.740&2.863&0.819&\textcolor{red}{\textbf{1.874}}&\textcolor{red}{\textbf{0.718}}\\
 &   & Avg &2.035&0.724&\textcolor{blue}{\underline{1.960}}&0.715&2.129&\textcolor{red}{\textbf{0.705}}&2.751&0.786&\textcolor{red}{\textbf{1.780}}&\textcolor{blue}{\underline{0.706}}\\
\cmidrule(lr){2-13} 
 & ExtraTrees & 1 &1.675&0.637&\textcolor{blue}{\underline{1.395}}&\textcolor{blue}{\underline{0.636}}&1.784&0.654&2.345&0.735&\textcolor{red}{\textbf{1.314}}&\textcolor{red}{\textbf{0.621}}\\
 &   & 15 &1.826&0.686&\textcolor{blue}{\underline{1.657}}&\textcolor{blue}{\underline{0.680}}&2.158&0.716&2.763&0.801&\textcolor{red}{\textbf{1.572}}&\textcolor{red}{\textbf{0.663}}\\
 &   & 30 &1.951&0.721&\textcolor{blue}{\underline{1.832}}&\textcolor{blue}{\underline{0.701}}&2.377&0.751&3.014&0.842&\textcolor{red}{\textbf{1.716}}&\textcolor{red}{\textbf{0.678}}\\
 &   & Avg &1.817&0.681&\textcolor{blue}{\underline{1.628}}&\textcolor{blue}{\underline{0.672}}&2.106&0.707&2.708&0.792&\textcolor{red}{\textbf{1.534}}&\textcolor{red}{\textbf{0.654}}\\
\cmidrule(lr){2-13} 
 & MLP & 1 &\textcolor{blue}{\underline{1.611}}&\textcolor{blue}{\underline{0.657}}&2.041&0.774&\textcolor{red}{\textbf{1.441}}&\textcolor{red}{\textbf{0.628}}&1.910&0.706&3.136&0.829\\
 &   & 15 &\textcolor{blue}{\underline{1.723}}&\textcolor{blue}{\underline{0.698}}&2.090&0.778&\textcolor{red}{\textbf{1.681}}&\textcolor{red}{\textbf{0.688}}&2.044&0.760&4.180&0.914\\
 &   & 30 &\textcolor{red}{\textbf{1.947}}&\textcolor{red}{\textbf{0.743}}&\textcolor{blue}{\underline{2.253}}&0.810&2.257&\textcolor{blue}{\underline{0.760}}&2.837&0.849&3.781&0.894\\
 &   & Avg &\textcolor{red}{\textbf{1.760}}&\textcolor{blue}{\underline{0.699}}&2.128&0.787&\textcolor{blue}{\underline{1.793}}&\textcolor{red}{\textbf{0.692}}&2.264&0.772&3.699&0.879\\
\cmidrule(lr){2-13} 
 & Transformer & 1 &\textcolor{red}{\textbf{1.712}}&\textcolor{red}{\textbf{0.671}}&1.802&0.704&\textcolor{blue}{\underline{1.798}}&\textcolor{blue}{\underline{0.698}}&2.229&0.759&2.128&0.705\\
 &   & 15 &\textcolor{red}{\textbf{1.752}}&\textcolor{red}{\textbf{0.671}}&\textcolor{blue}{\underline{2.029}}&\textcolor{blue}{\underline{0.717}}&2.084&0.724&2.301&0.763&2.421&0.781\\
 &   & 30 &\textcolor{red}{\textbf{1.959}}&\textcolor{red}{\textbf{0.727}}&\textcolor{blue}{\underline{2.092}}&\textcolor{blue}{\underline{0.742}}&2.325&0.773&2.660&0.826&2.579&0.814\\
 &   & Avg &\textcolor{red}{\textbf{1.808}}&\textcolor{red}{\textbf{0.690}}&\textcolor{blue}{\underline{1.974}}&\textcolor{blue}{\underline{0.721}}&2.069&0.732&2.397&0.783&2.376&0.767\\
\midrule
\multirow{12}*{\rotatebox[origin=c]{90}{Automobile}}
 & KRR & 1 &\textcolor{blue}{\underline{1.778}}&\textcolor{blue}{\underline{0.765}}&1.864&0.801&\textcolor{red}{\textbf{1.678}}&\textcolor{red}{\textbf{0.728}}&1.778&0.765&1.853&0.798\\
 &   & 15 &2.002&0.855&1.982&0.837&\textcolor{red}{\textbf{1.934}}&\textcolor{red}{\textbf{0.824}}&2.002&0.855&\textcolor{blue}{\underline{1.978}}&\textcolor{blue}{\underline{0.832}}\\
 &   & 30 &2.100&0.888&\textcolor{blue}{\underline{2.026}}&\textcolor{blue}{\underline{0.851}}&2.099&0.863&2.100&0.888&\textcolor{red}{\textbf{1.992}}&\textcolor{red}{\textbf{0.836}}\\
 &   & Avg &1.960&0.836&1.957&0.829&\textcolor{red}{\textbf{1.904}}&\textcolor{red}{\textbf{0.805}}&1.960&0.836&\textcolor{blue}{\underline{1.941}}&\textcolor{blue}{\underline{0.822}}\\
\cmidrule(lr){2-13} 
 & ExtraTrees & 1 &\textcolor{blue}{\underline{1.770}}&\textcolor{blue}{\underline{0.759}}&2.074&0.793&\textcolor{red}{\textbf{1.626}}&\textcolor{red}{\textbf{0.713}}&1.777&0.760&1.944&0.775\\
 &   & 15 &2.039&0.846&2.306&0.860&\textcolor{red}{\textbf{1.944}}&\textcolor{red}{\textbf{0.804}}&\textcolor{blue}{\underline{2.038}}&0.844&2.274&\textcolor{blue}{\underline{0.843}}\\
 &   & 30 &\textcolor{blue}{\underline{2.180}}&0.880&2.364&0.883&\textcolor{red}{\textbf{2.173}}&\textcolor{blue}{\underline{0.855}}&2.182&0.880&2.352&\textcolor{red}{\textbf{0.849}}\\
 &   & Avg &\textcolor{blue}{\underline{1.996}}&0.828&2.248&0.845&\textcolor{red}{\textbf{1.914}}&\textcolor{red}{\textbf{0.791}}&1.999&0.828&2.190&\textcolor{blue}{\underline{0.822}}\\
\cmidrule(lr){2-13} 
 & MLP & 1 &\textcolor{blue}{\underline{1.934}}&\textcolor{blue}{\underline{0.845}}&2.014&0.862&\textcolor{red}{\textbf{1.685}}&\textcolor{red}{\textbf{0.771}}&1.935&0.846&2.282&0.850\\
 &   & 15 &\textcolor{blue}{\underline{2.118}}&\textcolor{blue}{\underline{0.901}}&\textcolor{red}{\textbf{2.096}}&\textcolor{red}{\textbf{0.896}}&2.423&0.914&2.119&0.901&2.526&0.918\\
 &   & 30 &\textcolor{blue}{\underline{2.191}}&0.922&2.215&\textcolor{blue}{\underline{0.916}}&2.690&0.962&\textcolor{red}{\textbf{2.191}}&0.922&2.409&\textcolor{red}{\textbf{0.913}}\\
 &   & Avg &\textcolor{red}{\textbf{2.081}}&\textcolor{blue}{\underline{0.889}}&2.109&0.891&2.266&\textcolor{red}{\textbf{0.882}}&\textcolor{blue}{\underline{2.081}}&0.889&2.406&0.894\\
\cmidrule(lr){2-13} 
 & Transformer & 1 &2.073&0.879&2.202&0.880&\textcolor{blue}{\underline{1.990}}&\textcolor{blue}{\underline{0.837}}&2.073&0.879&\textcolor{red}{\textbf{1.558}}&\textcolor{red}{\textbf{0.688}}\\
 &   & 15 &\textcolor{blue}{\underline{1.985}}&0.860&2.166&0.877&2.059&\textcolor{blue}{\underline{0.858}}&1.985&0.860&\textcolor{red}{\textbf{1.862}}&\textcolor{red}{\textbf{0.778}}\\
 &   & 30 &\textcolor{blue}{\underline{2.153}}&0.905&2.245&0.896&2.214&\textcolor{blue}{\underline{0.893}}&\textcolor{red}{\textbf{2.153}}&0.905&2.179&\textcolor{red}{\textbf{0.854}}\\
 &   & Avg &\textcolor{blue}{\underline{2.070}}&0.881&2.204&0.884&2.088&\textcolor{blue}{\underline{0.863}}&2.070&0.881&\textcolor{red}{\textbf{1.867}}&\textcolor{red}{\textbf{0.773}}\\
\midrule
\multirow{12}*{\rotatebox[origin=c]{90}{Dysts}}
 & KRR & 1 &\textcolor{blue}{\underline{0.197}}&\textcolor{blue}{\underline{0.311}}&0.564&0.585&0.217&0.334&\textcolor{red}{\textbf{0.112}}&\textcolor{red}{\textbf{0.245}}&0.530&0.575\\
 &   & 15 &0.449&\textcolor{blue}{\underline{0.514}}&0.656&0.640&\textcolor{red}{\textbf{0.417}}&\textcolor{red}{\textbf{0.491}}&\textcolor{blue}{\underline{0.439}}&0.516&0.599&0.611\\
 &   & 30 &\textcolor{blue}{\underline{0.560}}&\textcolor{blue}{\underline{0.585}}&0.670&0.650&\textcolor{red}{\textbf{0.508}}&\textcolor{red}{\textbf{0.552}}&0.594&0.614&0.608&0.617\\
 &   & Avg &0.402&0.470&0.630&0.625&\textcolor{red}{\textbf{0.381}}&\textcolor{blue}{\underline{0.459}}&\textcolor{blue}{\underline{0.382}}&\textcolor{red}{\textbf{0.458}}&0.579&0.601\\
\cmidrule(lr){2-13} 
 & ExtraTrees & 1 &0.227&0.320&0.587&0.562&\textcolor{blue}{\underline{0.217}}&\textcolor{red}{\textbf{0.312}}&\textcolor{red}{\textbf{0.196}}&\textcolor{blue}{\underline{0.319}}&0.536&0.549\\
 &   & 15 &\textcolor{blue}{\underline{0.397}}&\textcolor{blue}{\underline{0.456}}&0.723&0.641&\textcolor{red}{\textbf{0.351}}&\textcolor{red}{\textbf{0.423}}&0.423&0.495&0.653&0.613\\
 &   & 30 &\textcolor{blue}{\underline{0.520}}&\textcolor{blue}{\underline{0.543}}&0.757&0.668&\textcolor{red}{\textbf{0.474}}&\textcolor{red}{\textbf{0.507}}&0.570&0.594&0.684&0.631\\
 &   & Avg &\textcolor{blue}{\underline{0.381}}&\textcolor{blue}{\underline{0.440}}&0.689&0.624&\textcolor{red}{\textbf{0.347}}&\textcolor{red}{\textbf{0.414}}&0.396&0.469&0.624&0.597\\
\cmidrule(lr){2-13} 
 & MLP & 1 &0.198&0.299&0.524&0.563&\textcolor{blue}{\underline{0.188}}&\textcolor{blue}{\underline{0.293}}&\textcolor{red}{\textbf{0.135}}&\textcolor{red}{\textbf{0.252}}&81.408&1.403\\
 &   & 15 &\textcolor{blue}{\underline{0.523}}&\textcolor{blue}{\underline{0.506}}&0.633&0.620&\textcolor{red}{\textbf{0.456}}&\textcolor{red}{\textbf{0.470}}&0.530&0.522&76.533&1.647\\
 &   & 30 &0.680&\textcolor{blue}{\underline{0.600}}&\textcolor{blue}{\underline{0.653}}&0.638&\textcolor{red}{\textbf{0.633}}&\textcolor{red}{\textbf{0.571}}&0.785&0.654&57.319&1.522\\
 &   & Avg &\textcolor{blue}{\underline{0.467}}&\textcolor{blue}{\underline{0.468}}&0.603&0.607&\textcolor{red}{\textbf{0.426}}&\textcolor{red}{\textbf{0.444}}&0.483&0.476&71.753&1.524\\
\cmidrule(lr){2-13} 
 & Transformer & 1 &0.394&0.439&0.759&0.683&\textcolor{blue}{\underline{0.382}}&\textcolor{blue}{\underline{0.430}}&\textcolor{red}{\textbf{0.358}}&\textcolor{red}{\textbf{0.426}}&0.521&0.549\\
 &   & 15 &\textcolor{blue}{\underline{0.496}}&\textcolor{blue}{\underline{0.497}}&0.745&0.673&\textcolor{red}{\textbf{0.441}}&\textcolor{red}{\textbf{0.465}}&0.514&0.522&0.586&0.586\\
 &   & 30 &\textcolor{blue}{\underline{0.614}}&\textcolor{blue}{\underline{0.580}}&0.757&0.687&\textcolor{red}{\textbf{0.543}}&\textcolor{red}{\textbf{0.541}}&0.693&0.631&0.640&0.622\\
 &   & Avg &\textcolor{blue}{\underline{0.501}}&\textcolor{blue}{\underline{0.505}}&0.754&0.681&\textcolor{red}{\textbf{0.455}}&\textcolor{red}{\textbf{0.479}}&0.522&0.526&0.582&0.585\\
\bottomrule
\end{tabular}
}
\end{table}

\end{document}